\documentclass[11pt]{article}

\PassOptionsToPackage{table}{xcolor}

\usepackage[preprint]{acl}

\usepackage{times}
\usepackage{latexsym}
\usepackage[T1]{fontenc}
\usepackage[utf8]{inputenc}
\usepackage{microtype}
\usepackage{inconsolata}
\usepackage{graphicx}

\usepackage{amsmath}
\usepackage{amssymb}
\usepackage{amsfonts}
\usepackage{amsthm}
\usepackage{mathtools}
\usepackage{bm}
\usepackage{nicefrac}
\usepackage{booktabs}
\usepackage{seqsplit} 
\newcommand{\bsha}[1]{\texttt{\seqsplit{#1}}}

\newtheorem{theorem}{Theorem}
\newtheorem{lemma}{Lemma}
\newtheorem{corollary}{Corollary}
\newtheorem{proposition}{Proposition}[section]
\newtheorem{assumption}{Assumption}[section]
\theoremstyle{definition}
\newtheorem{definition}{Definition}[section]
\theoremstyle{remark}
\newtheorem{remark}{Remark}[section]

\definecolor{ourbg}{HTML}{FFF6E0}
\newcommand{\dn}{\,$\downarrow$}
\newcommand{\up}{\,$\uparrow$}
\newcommand{\our}{\rowcolor{ourbg}}
\newcommand{\starours}{\textbf{\textcolor{orange!80!black}{$\star$}}\,}

\newenvironment{shaded}{}{}

\usepackage{cleveref}
\crefname{assumption}{Assumption}{Assumptions}
\Crefname{assumption}{Assumption}{Assumptions}

\title{QAM-W: Joint 2D Codebook Quantization for LLM Weights via Hadamard
Rotation and Activation-Aware Scaling}

\author{Preetam Sharma \\
  Independent Research \\
  \texttt{preetam@manifoldlab.ai}
  \And
  Kacper Dobek \\
  Institute of Computing Science \\
  Poznan University of Technology, Poznan, Poland\\
  \texttt{kacper.dobek@cs.put.poznan.pl}}

\begin{document}
\maketitle

\begin{abstract}

Scalar post-training quantizers discard pairwise coordinate structure
within weight rows. We introduce QAM-W (Quadrature Amplitude Modulation
for Weights), a codec that recovers this structure: each row is L2-normalized, block-Hadamard rotated, paired
into 2D coordinates, and quantized against a single Lloyd-Max codebook
trained on the unit circular Gaussian, with activation-aware
per-channel scaling. In a cross-model study spanning five LLMs from
four families (1.1B--13B parameters) and eight quantized
configurations, the activation-aware variant at $\approx 5.5$ bpw
stays within $\pm 0.4\%$ of BF16 WikiText-2 perplexity on every
model, matching the SmoothQuant W8A8 quality envelope at $32\%$ fewer
weight bits. Joint 2D coding outperforms polar (amplitude $\times$
phase) coding by 2--15~pp $\Delta$PPL at equal bitrate, and paired KL
against BF16 tracks $\Delta$PPL\% at Spearman $\rho = 0.99$ across
37 (method, model) rows, consistent with a monotone composite bound
from codec distortion to KL divergence. A 3.5~bpw variant is
competitive on quantization-tolerant architectures. At strict 4~bpw,
the rotated-codebook frontier method QTIP outperforms QAM-W; the
contribution is the quality-preserving 5--6~bpw band.

\end{abstract}

\section{Introduction}
\label{sec:introduction}


\begin{shaded}
Most post-training quantizers in current use are scalar:
each weight is mapped independently to a small integer index, with
per-group or per-channel scales. Scalar coding is simple but leaves
rate-distortion slack for sources whose coordinates are correlated.
Within a weight row, coordinates are typically correlated; after an
orthogonal rotation they approximate a 2D circular Gaussian.
Quantizing pairs jointly -- as points in the plane rather than as
two independent scalars -- recovers pair-level structure that a
scalar codebook discards.

The empirical claim of this work is bounded: \emph{in the
$\approx 5$--$6$ bpw quality-preserving regime}, an activation-aware
joint-2D codec stays within $\pm 0.4\%$ of BF16 perplexity across
five LLMs from four families spanning $1.1$B--$13$B parameters, matching
SmoothQuant~\citep{xiao2024smoothquant} W8A8's quality at
$\approx 32\%$ fewer \emph{weight} bits.
At strict $4$~bpw, QTIP~\citep{tseng2024qtip} outperforms QAM-W
(\cref{sec:frontier}); the contribution here is the 5--6~bpw band.

Rotation-based methods such as QuIP~\citep{chee2024quip} and
QuIP\#~\citep{tseng2024quip} apply incoherent rotations to
homogenize coordinate magnitudes before scalar or lattice coding.
Activation-aware methods such as AWQ~\citep{lin2024awq} exploit the
per-channel activation scale but leave joint coordinate structure
untouched. The codec presented here combines all three ideas: a
deterministic block-Hadamard rotation, joint 2D Lloyd-Max coding
against a single codebook trained on the unit circular
Gaussian~\citep{lloyd1982least}, and AWQ-style per-channel scaling
driven by the layer-output trace identity.

Throughout, reported bits-per-weight (bpw) refers to weight-storage 
and memory-bandwidth footprint. The current pipeline dequantizes to 
BF16 before matmul; latency claims would require a fused kernel and 
are out of scope (see \cref{sec:limitations}).

\paragraph{Contributions:}
\begin{itemize}
  \item \textbf{Codec and analysis.} A row-norm-factored,
        block-Hadamard-rotated, joint 2D Lloyd-Max codec for LLM
        weights with activation-aware scaling, backed by a composite
        monotone bound from codec distortion to KL divergence
        (\cref{sec:method,sec:analysis}).
  \item \textbf{Cross-model study.} Eight quantized configurations
        on five LLMs under a unified perplexity, paired-KL, and
        six-task harness protocol
        (\cref{sec:cross-model,sec:frontier}).
  \item \textbf{KL as rank diagnostic.} Across $37$ (method, model)
        rows, paired KL and $\Delta$PPL\% are rank-correlated at
        Spearman $\rho = 0.99$.
\end{itemize}
\end{shaded}

\section{Related Work}
\label{sec:related}


\begin{shaded}

\paragraph{Scalar post-training quantization.}
Round-to-nearest (RTN) quantization~\citep{rtn} with per-group scaling
is the simplest baseline. GPTQ~\citep{frantar2022gptq} compensates
quantization error column-by-column using approximate second-order
information. AWQ~\citep{lin2024awq} protects high-activation channels
via per-channel scaling; AutoRound~\citep{cheng2023autoround} refines
rounding with SignSGD; HQQ~\citep{badri2023hqq} treats rounding as a
half-quadratic problem without calibration data.
SmoothQuant~\citep{xiao2024smoothquant} redistributes quantization
difficulty from activations to weights through a diagonal scaling
transform, targeting joint W8A8. OmniQuant~\citep{shao2024omniquant}
jointly learns clipping and smoothing transforms.
LLM.int8()~\citep{dettmers2022int8} keeps outlier activation channels
in 16-bit mixed precision. SqueezeLLM~\citep{kim2024squeezellm}
couples a non-uniform scalar codebook with a dense-and-sparse
decomposition; QLoRA's NF4~\citep{dettmers2024qlora} places levels
on the Gaussian distribution.

\paragraph{Rotated and vector quantization.}
QuIP~\citep{chee2024quip} and QuIP\#~\citep{tseng2024quip} apply
random or structured orthogonal rotations to decorrelate weight
coordinates before quantization.
QuaRot~\citep{ashkboos2024quarot} extends rotation through the
residual stream for end-to-end 4-bit inference;
SpinQuant~\citep{liu2024spinquant} replaces the fixed rotation with a
learned one. QAM-W shares the rotation principle but opts for a
deterministic block-Hadamard, which requires no calibration on the
rotation itself. Iso-bpw comparison against the learned-rotation
family is deferred to future work.

On the VQ side, AQLM~\citep{egiazarian2024aqlm} uses additive
codebooks and pushes below $2.5$~bpw.
QTIP~\citep{tseng2024qtip} combines an $E_8$-lattice trellis with
Hadamard incoherence and is the strongest published rotated-codebook
method; it and QAM-W occupy different operating points, with QTIP
stronger at strict 4~bpw and QAM-W stronger in the 5--6~bpw band
(\cref{sec:frontier}).
VPTQ~\citep{liu2024vptq} stacks residual codebooks down to
$\sim\!2$~bpw; GPTVQ~\citep{vanbaalen2024gptvq} shows the
rate-distortion gain from higher codebook dimension saturates around
$d\!=\!8$, placing QAM-W's $d\!=\!2$ choice at the small end of that
spectrum by design.

\paragraph{Concurrent work.}
PolarQuant~\citep{vicentino2026polarquant} couples a Walsh-Hadamard
rotation with a \emph{scalar} ($d\!=\!1$) Lloyd-Max codebook, sharing
QAM-W's first two pipeline stages but differing in codebook dimension.
The $d\!=\!1$ vs.\ $d\!=\!2$ ablation is the highest-priority follow-up.

\paragraph{Signal-processing origins.}
Quadrature amplitude modulation (QAM) originates in digital
communications~\citep{proakis2008digital};
QAM-W adapts this geometric view to weight compression. The codebook
training follows the Lloyd-Max
tradition~\citep{lloyd1982least,max1960quantizing} extended to 2D
sources, and can be seen as a Hadamard-rotated generalization of the
$k$-means weight clustering in Deep
Compression~\citep{han2016deepcompression}.
\end{shaded}

\section{QAM-W Codec Pipeline}
\label{sec:method}


This section describes the codec as an encoder/decoder pair. The
mathematical properties of the rotation and polar baseline are
separated into \cref{sec:codec-analysis}.

\subsection{Encoder}

\begin{shaded}
QAM-W quantizes a weight matrix
$W \in \mathbb{R}^{d_{\mathrm{out}} \times d_{\mathrm{in}}}$ row-by-row.
For each row $w$, the encoder performs the following steps:
\end{shaded}
\begin{enumerate}
  \item \textbf{Separate scale from direction.} Record the row norm
        $r=\|w\|_2$ as separate metadata and normalize the row to
        $u=w/r$ when $r>0$. If $r=0$, the row is encoded as a zero row.
  \item \textbf{Rotate the direction.} Apply a deterministic sign-masked
        block-Hadamard rotation, giving $y=R_{\mathrm{fwd}}u$.
  \item \textbf{Pair coordinates.} Group adjacent rotated coordinates into
        complex values $z_k=y_{2k}+iy_{2k+1}$ for
        $k=0,\ldots,d/2-1$.
  \item \textbf{Normalize each pair.} Divide each pair by its calibrated
        scale $\sigma_k$.
  \item \textbf{Quantize.} Encode each normalized pair either with the polar
        baseline codebooks or with the joint 2D QAM-W codebook.
  \item \textbf{Pack.} Store the resulting codebook indices in a bit-packed
        byte stream.
\end{enumerate}

\subsection{Decoder}

\begin{shaded}
The decoder reverses the encoder: unpack codebook indices, look up
the corresponding centroids, rescale each pair by $\sigma_k$,
reassemble pairs into a rotated real vector, apply the inverse
rotation $R_{\mathrm{inv}}$, and multiply by the stored row norm~$r$.
The benchmark charges 16~bits per row for norm metadata; a fully
serialized f16 row norm would add a small radial quantization term
not part of the directional analysis in \cref{sec:codec-analysis}.
\end{shaded}

\subsection{Rotation Choice}

The rotation used by the encoder is a block-Hadamard transform with a
deterministic sign mask. It is cheap to apply with a butterfly algorithm and
requires no stored dense matrix. The implementation chooses the block size $b$
as the largest power of two dividing the row dimension $d_{\text{in}}$, capped at
$b_{\max}=1024$. For the weight matrices in the experiments in this paper,
$d_{\text{in}}=2048$ uses $b=1024$ and $d_{\text{in}}=5632$ uses $b=512$. The rotation's exact isometry property is proved in
\cref{lem:rotation}.

\subsection{Pair Calibration}
\label{subsec:complex}

After rotation, adjacent coordinates are modeled empirically as approximately
circular Gaussian pairs with pair-specific scale $\sigma_k$. Under the idealized
model that the two coordinates in pair $k$ are zero-mean Gaussian variables with
equal variance $\sigma_k^2$ and zero covariance, the amplitude $|z_k|$ is
Rayleigh distributed and $\mathbb{E}|z_k|=\sigma_k\sqrt{\pi/2}$. Calibration
therefore estimates
\begin{equation}
  \hat{\sigma}_k =
  \frac{\operatorname{mean}(|z_k|)}{\sqrt{\pi/2}}
\end{equation}
from up to 1024 unit-normalized rows per weight matrix.

\subsection{Polar Baseline}

The polar baseline quantizes amplitude and phase independently. The amplitude
$|z_k|/\sigma_k$ is quantized with an $N_a=2^{B_a}$ level Lloyd-Max codebook for
the unit Rayleigh density $f(r)=r e^{-r^2/2}$. The phase $\arg(z_k)$ is rounded
to one of $N_p=2^{B_p}$ uniform bins. The total per-pair budget is
$B=B_a+B_p$ bits. The pairwise distortion model for this baseline is analyzed in
\cref{thm:pairwise}.

\subsection{Joint 2D QAM}

The main QAM-W variant replaces the two independent polar codebooks with a
single 2D codebook
$\mathcal{C}=\{c_0,\ldots,c_{2^B-1}\}\subset\mathbb{R}^2$. The codebook is
trained by Lloyd iterations on the unit circular Gaussian
\begin{equation}
  f(x,y)=\frac{1}{2\pi}\exp\left(-\frac{x^2+y^2}{2}\right).
\end{equation}
Encoding maps each normalized pair $z_k/\sigma_k$ to its nearest codebook entry
by Euclidean distance. Decoding is a table lookup followed by multiplication by
$\sigma_k$.

The current experiments use $B=11$ bits per pair, matching the
$5{+}6$ polar configuration in nominal pair payload. Native byte alignment and stream layout
can differ between polar and joint-2D implementations; the comparison is
therefore equal-bits-per-pair rather than byte-identical storage.

The choice $d=2$ is also deliberate at the decode end: reconstruction is a
  single lookup into a $2^B$-entry table that, at $B \le 11$, occupies at most
  $16$~KB and stays L1-resident. This is structurally simpler than the
  multi-codebook lookups and vector additions of additive methods such as
  AQLM~\citep{egiazarian2024aqlm}, or the sequential trellis decoding of
  QTIP~\citep{tseng2024qtip}. Given that rate-distortion gains saturate by
  $d \approx 8$ (\cref{sec:related}), $d=2$ deliberately trades part of the
  asymptotic ceiling for a decode path that is one cache-resident table read.

\subsection{Bitrate Accounting}

The bitrate per weight includes $B$ bits per pair for the codebook index and
16 bits per row for row-norm metadata. Since each pair covers two scalar
coordinates,
\begin{equation}
  \operatorname{bpw} = \frac{B}{2} + \frac{16}{d_{\mathrm{in}}}.
\end{equation}
For $d_{\mathrm{in}}=2048$ and $B=11$, this is approximately 5.51 bpw before
byte-alignment overhead. Reported bitrates include the row-level packing and
alignment used by the implementation.

\subsection{The QAM-W-3.5 Low-Bit Variant}
\label{subsec:qam-w-7}

To probe the sub-4 bpw regime, QAM-W is instantiated with $B=7$ bits per pair,
giving a $2^7=128$-entry 2D Lloyd-Max codebook trained on the same unit
circular Gaussian. Activation-aware per-channel scaling (\cref{sec:activation})
remains applicable. For $d_{\mathrm{in}}=2048$ the per-pair budget plus row-norm
metadata gives $\operatorname{bpw}\approx 3.51$; the AWQ-aware variant adds
$\sim 0.003$ bpw of per-channel scale metadata for $\alpha=0.3$. This is the
configuration analyzed in \cref{sec:low-bit}.

\section{Analysis}
\label{sec:analysis}

QAM-W's design rests on four analytical results. We state them here and
defer the formal statements, proofs, and supporting empirical checks to
\cref{sec:codec-analysis,sec:activation,sec:model-behavior}.

\paragraph{The rotation is distortion-free.} The sign-masked
block-Hadamard rotation is an exact isometry in real arithmetic
(\cref{lem:rotation}): it adds no reconstruction error and only
redistributes energy within each block before quantization. It does not
by itself make the rotated coordinates Gaussian; that the post-rotation
complex pairs concentrate near a circular Gaussian is an empirical
property, checked by QQ plots against fitted Rayleigh and Gaussian
marginals (\cref{subsec:qq-check}).

\paragraph{Joint 2D coding dominates polar at fixed rate.} For the polar
baseline, the per-pair distortion splits into an amplitude term and a
phase term with a closed form under the Rayleigh--Lloyd source model
(\cref{thm:pairwise,cor:rayleigh-polar}). Because that factorization
codes amplitude and phase independently, it cannot capture the
pair-level correlation that a single 2D codebook exploits.

\paragraph{Activation-weighted error is the right objective.} The
layer-output error obeys the trace identity
$\|X\Delta W^\top\|_F^2 = \operatorname{Tr}(\Delta W \hat{M}_x \Delta W^\top)$
(\cref{prop:layer}), so what a quantized layer actually contributes is
\emph{activation-weighted} error, not raw weight MSE. This motivates
AWQ-style per-channel scaling, which attenuates the contribution of
high-RMS input channels under a diagonal-$M_x$ model
(\cref{cor:diagonal-rms,prop:aware}). Empirically, the amplification of
weight-domain Frobenius error into layer-output RMSE varies by
$0.55$--$5.5\times$ across layers and methods (\cref{tab:layer-diag}).

\paragraph{A monotone codec-to-KL bound.} Chaining the rotation isometry,
the pairwise identity, the layer-output trace identity, and a
softmax-Fisher local KL expansion (\cref{cor:kl-bridge}) yields an upper
bound on the paired KL between the BF16 and quantized models that is
monotone in two scalars: the per-pair Lloyd distortion $D_B$ (codec) and
a model-fixed constant $C_W$ (\cref{thm:composite}). The bound motivates,
but does not prove, that realized KL should rank methods the same way
$D_B$ does; across the $37$ (method, model) rows in this study, paired KL
and $\Delta$PPL\% are rank-correlated at Spearman $\rho = 0.99$
(\cref{fig:kl-vs-dppl}).

\section{Experiments}
\label{sec:experiments}


\begin{shaded}
Eight quantized configurations are evaluated against a BF16 reference
on three instruction-tuned LLMs:
TinyLlama-1.1B-Chat~\citep{zhang2024tinyllama},
Qwen2.5-3B-Instruct~\citep{yang2024qwen25}, and
Mistral-7B-Instruct-v0.3~\citep{jiang2023mistral}.
These are dense decoder-only transformers with standard attention,
chosen so the comparison rests on mature, audited loading and
evaluation tooling (see Limitations).
Only MLP gate, up, and down projections are quantized; attention
weights and the LM head remain BF16.

\paragraph{Baselines.} Five external methods: RTN~\citep{rtn},
GPTQ~\citep{frantar2022gptq}, AWQ~\citep{lin2024awq}, and
AutoRound~\citep{cheng2023autoround}, all at W4A16 g128
($\approx 4.1$~bpw); SmoothQuant~\citep{xiao2024smoothquant} W8A8
($\approx 8.1$~bpw). \emph{SmoothQuant eval mode:} weights are
smoothed and fake-quantized to int8 then restored to BF16; activations
stay BF16. All comparisons against SmoothQuant in this paper are on
weight storage and memory bandwidth, not matmul arithmetic.

\paragraph{QAM-W variants.} QAM-W-polar (polar baseline, $\approx 4.0$~bpw),
QAM-W-4 (joint 2D, $B\!=\!8$, $\approx 4.0$~bpw), and
QAM-W-5.5 (joint 2D, $B\!=\!11$, AWQ-style scaling $\alpha\!=\!0.3$,
$\approx 5.5$~bpw). A sub-4~bpw variant QAM-W-3.5 ($B\!=\!7$,
$\approx 3.5$~bpw) is reported in \cref{sec:low-bit}.

\paragraph{Metrics.} WikiText-2~\citep{merity2017wikitext} stride
perplexity (seq\_len 2048, stride 1024, 16K scored tokens); paired
KL divergence against BF16 (4K scored tokens); six-task
lm-evaluation-harness~\citep{eleuther2024lmevalharness} panel (PIQA \citep{bisk2020piqa},
HellaSwag \citep{zellers2019hellaswag}, COPA \citep{roemmele2011copa}, RTE \citep{wang2018glue}, OpenBookQA \citep{mihaylov2018openbookqa}, LAMBADA-OpenAI \citep{paperno2016lambada}). Full
reproducibility details including model SHAs, calibration setup, and
hardware are in \cref{app:reproducibility}.
\end{shaded}

\subsection{Cross-Model Quality Results}
\label{sec:cross-model}


\begin{figure*}[t]
\centering
\includegraphics[width=\linewidth]{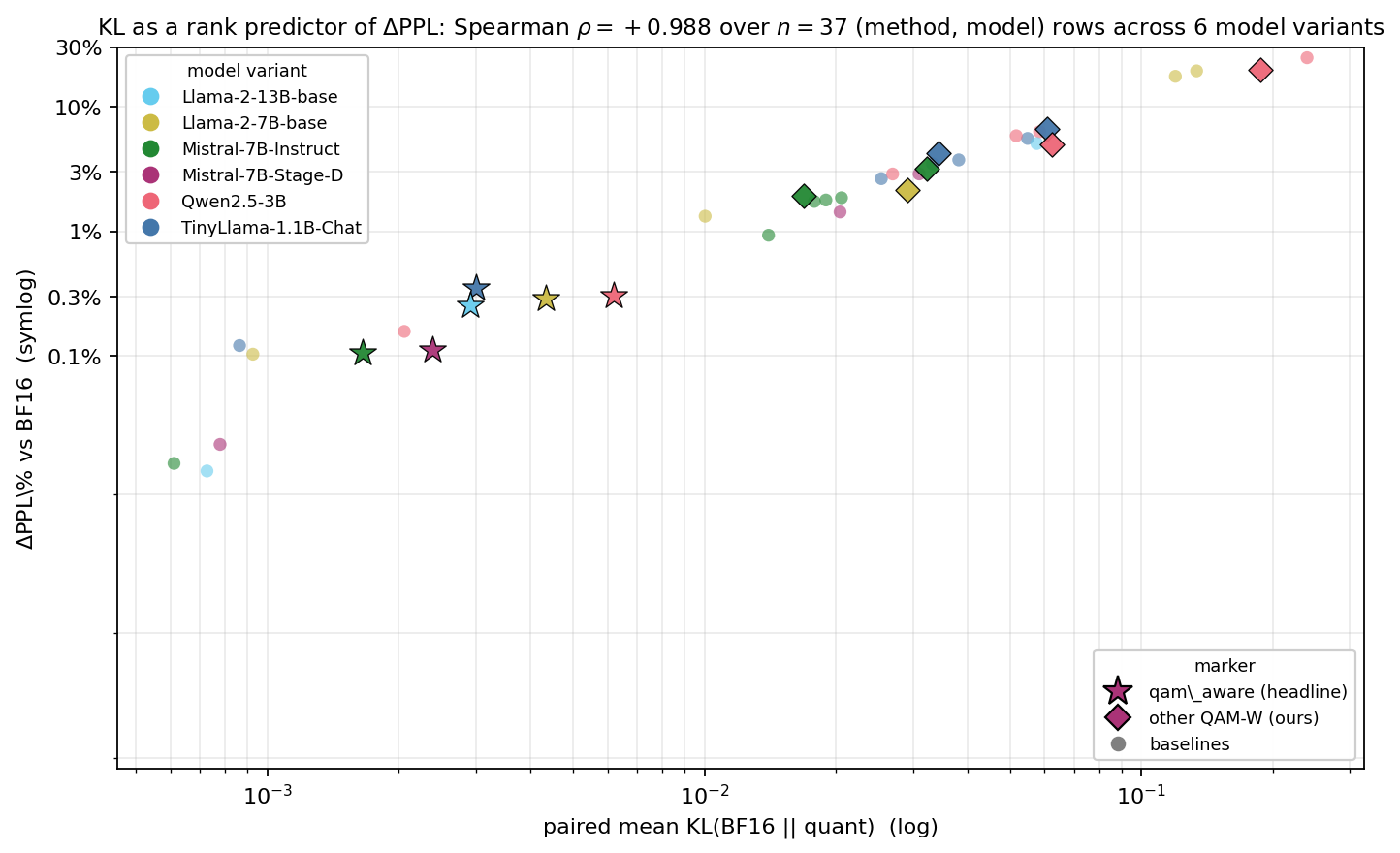}
\caption{Empirical check of \cref{thm:composite}: measured paired KL
(log $x$) vs.\ measured $\Delta$PPL\% (symlog $y$) for every (method,
model) row. Only the headline points and the two large outlier points
are labelled to avoid clutter. The pooled rank correlation
across all 37 (method, model) rows spanning six model variants
is Spearman $\rho = 0.99$.}
\label{fig:kl-vs-dppl}
\end{figure*}

\begin{shaded}
\Cref{tab:cross-model-delta} summarizes the cross-model quality story.
\end{shaded}

\begin{table*}[t]
\centering
\caption{Cross-architecture quantization sensitivity. Each column reports
$\Delta$PPL\% versus that model's BF16 reference (anchor PPL in the column
header); rows are sorted by ascending mean $\Delta$ across models.
QAM-W-5.5 is the lowest-bpw method in the
BF16-quality envelope: the only sub-8 bpw method that stays within
$\pm 0.4\%$ of BF16 on every model.
Detailed per-model breakdowns with paired KL and harness average are in
\cref{app:per-model-tables}.\\
\textit{Reader key:} {\dn} lower is better. Shaded rows ({\starours})
mark QAM-W (ours).}
\label{tab:cross-model-delta}
\setlength{\tabcolsep}{6pt}
\begin{tabular}{lrrrr}
\toprule
method & bpw\dn &
\shortstack[r]{TinyLlama-1.1B\\$\Delta$ vs 7.1499\dn} &
\shortstack[r]{Qwen2.5-3B\\$\Delta$ vs 6.7826\dn} &
\shortstack[r]{Mistral-7B\\$\Delta$ vs 4.8883\dn} \\
\midrule
SmoothQuant W8A8                  & 8.1 & $+0.1\%$ & $+0.2\%$  & $+0.0\%$ \\
\our \starours QAM-W-5.5 & 5.5 & $+0.4\%$ & $+0.3\%$  & $+0.1\%$ \\
GPTQ W4A16 g128                   & 4.1 & $+2.6\%$ & $+2.9\%$  & $+0.9\%$ \\
AutoRound W4A16 g128                & 4.1 & $+3.5\%$ & $+5.8\%$  & $+1.7\%$ \\
AWQ W4A16 g128                    & 4.1 & $+3.7\%$ & $+6.3\%$  & $+1.8\%$ \\
\our \starours QAM-W-4             & 4.0 & $+4.2\%$ & $+5.0\%$  & $+1.9\%$ \\
RTN W4A16 g128                    & 4.1 & $+5.6\%$ & $+24.6\%$ & $+1.9\%$ \\
\our \starours QAM-W-polar          & 4.0 & $+6.6\%$ & $+19.6\%$ & $+3.2\%$ \\
\bottomrule
\end{tabular}
\end{table*}

\begin{shaded}
\paragraph{QAM-W-5.5 reaches the BF16 envelope at the lowest bitrate.}
At $\approx 5.5$~bpw, QAM-W-5.5 stays within
$\pm 0.4\%$ of BF16 perplexity on all three models. SmoothQuant W8A8
reaches the same envelope but at $\approx 8.1$ bpw --- about $47\%$
more weight bits.

\textbf{Calibration narrows the cross-model spread.} Uncalibrated RTN 
costs +1.9\% on Mistral 7B but +24.6\% on Qwen 2.5 3B — a 12$\times$ 
spread driven by architecture alone. Among calibrated methods 
(GPTQ, AWQ, AutoRound, QAM-W-5.5) the cross-model spread is much 
tighter. Single-model comparisons among uncalibrated quantizers can 
therefore mislead in a way that calibrated comparisons typically do not.

\paragraph{Joint 2D coding beats polar at equal bitrate.}
At $\approx 4$ bpw, QAM-W-4 beats QAM-W-polar by 1–3 pp of $\Delta$PPL 
on the quantization-tolerant models (TinyLlama, Mistral) and by 14.6 pp 
on the quantization-sensitive Qwen 2.5. The joint codebook captures 
pair-level correlations that the polar factorization discards; the gain 
is largest precisely where the model is most sensitive to codec choice.
\end{shaded}

\subsubsection{Cross-Model Slope}

\begin{figure*}[t]
\centering
\includegraphics[width=\linewidth]{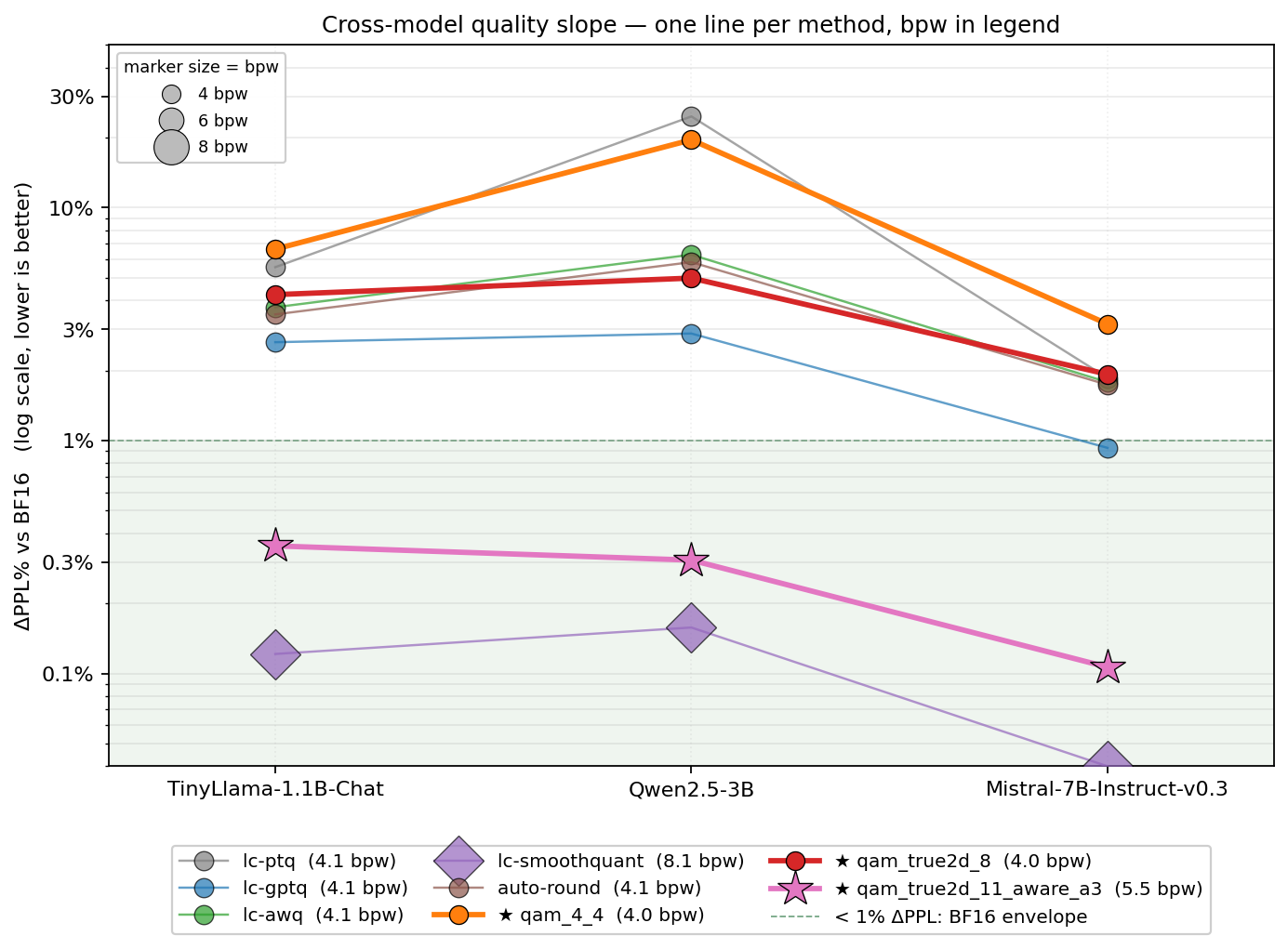}
\caption{Cross-model $\Delta$PPL\% slope. One line per method; the shaded
band at the bottom is the $<\!1\%$ BF16-envelope zone. Two methods are
drawn thicker as the headline: QAM-W-5.5 ($\star$,
$\approx 5.5$ bpw) sits inside the envelope on all three architectures;
SmoothQuant ($\diamondsuit$, $\approx 8.1$ bpw) reaches the same
envelope but at $\sim 47\%$ higher bitrate.}
\label{fig:slope-dppl}
\end{figure*}

\begin{shaded}
Across the $37$ (method, model) rows assembled in this study, paired KL
and $\Delta$PPL\% are rank-correlated at Spearman $\rho = 0.99$
(\cref{fig:kl-vs-dppl}), consistent with the composite
bound in \cref{sec:analysis}. Multi-seed stride-offset rescoring confirms
the ranking is robust (std $0.03$--$0.08$~pp;
\cref{app:multiseed}).
\end{shaded}

\subsection{Iso-bpw Frontier Comparison}
\label{sec:frontier}


\begin{shaded}
To position QAM-W against the rotated-codebook frontier, we run an
iso-bpw comparison on Llama-2-7B-base~\citep{touvron2023llama2}
against QTIP~\citep{tseng2024qtip} (4-bit and 2-bit) and
AQLM~\citep{egiazarian2024aqlm} (2-bit), using published HuggingFace
checkpoints dequantized to BF16 before scoring under our protocol.
\end{shaded}

\begin{table*}[t]
\centering
\caption{Frontier comparison on Llama-2-7B-base, sorted by ascending
bpw. SmoothQuant W8A8 bpw is over MLP weights only; QTIP, AQLM, and
the QAM-W rows quantize the full weight pool per their published
configurations.\\
\textit{Reader key:} {\dn} lower is better, {\up} higher is better.
Shaded rows ({\starours}) are QAM-W (ours).}
\label{tab:frontier-llama2}
\setlength{\tabcolsep}{5pt}
\begin{tabular}{lrrrrr}
\toprule
config & bpw\dn & PPL\dn & $\Delta$PPL\%\dn & mean KL\dn & harness avg\up \\
\midrule
BF16 (ref)                                & 16.00 & 4.7527 & ---       & ---     & 0.6990 \\
SmoothQuant W8A8                          &  8.13 & 4.7576 & $+0.10\%$ & 0.0009  & 0.7009 \\
\our \starours QAM-W-5.5                  &  5.51 & 4.7664 & $+0.29\%$ & 0.0043  & 0.7008 \\
QTIP 4-bit (\texttt{relaxml})             &  4.00 & 4.8155 & $+1.32\%$ & 0.0100  & 0.7027 \\
\our \starours QAM-W-4                    &  4.00 & 4.8551 & $+2.15\%$ & 0.0292  & 0.6993 \\
QTIP 2-bit (\texttt{relaxml})             &  2.00 & 5.5841 & $+17.49\%$ & 0.1195 & 0.6764 \\
AQLM 2-bit 1$\times$16 (\texttt{ISTA-DASLab}) & 2.03 & 5.6716 & $+19.33\%$ & 0.1337 & 0.6782 \\
\bottomrule
\end{tabular}
\end{table*}

\begin{shaded}
\paragraph{QAM-W-5.5 transfers; QTIP leads at 4~bpw.}
QAM-W-5.5 on Llama-2-7B-base reaches $+0.29\%$~$\Delta$PPL, within
the same envelope as the cross-model rows. At iso-4-bpw, QTIP-4Bit
($+1.32\%$, KL~$0.010$) beats QAM-W-4 ($+2.15\%$, KL~$0.029$):
QTIP's $E_8$ lattice trellis and BlockLDLQ rounding outpace QAM-W's
$d\!=\!2$ Lloyd codebook at a strict 4-bit ceiling.
QAM-W-5.5 improves over QTIP-4Bit only at $\sim\!1.4\times$ the bit
budget --- a different operating point, not a Pareto win.

\paragraph{Low-bpw frontier.}
At $\approx 2$~bpw, QTIP and AQLM take comparable PPL hits
($+17$--$19\%$). QAM-W has no 2-bpw operating point; its closest
variant is QAM-W-3.5 at $\approx 3.5$~bpw (\cref{sec:low-bit}).
\end{shaded}

\subsection{Operating Points and Downstream Behavior}


\subsubsection{Low-Bit Operating Point: QAM-W-3.5}
\label{sec:low-bit}

\begin{shaded}
A $B\!=\!7$ codebook ($128$ codewords) drops QAM-W to
$\approx 3.5$~bpw. On TinyLlama, activation-aware scaling
recovers $\approx 0.5$~bits of effective precision:
QAM-W-3.5 ($+4.1\%$~$\Delta$PPL, KL~$0.037$) essentially ties
the unscaled QAM-W-4 at $4.0$~bpw ($+4.2\%$, KL~$0.034$).
Without scaling the 7-bit codec costs $+7.6\%$.
On Qwen~2.5 the 128-entry codec is not yet Pareto-competitive
($+7.5\%$, worse than GPTQ~\citep{frantar2022gptq} at $+2.9\%$),
indicating $\sim\!4$~bpw as the lower bound at which the 2D codec
geometry pays off on sensitive architectures.
On Mistral~7B, 3.5~bpw costs only $+1.98\%$.
Full results in \cref{app:low-bit-details}.
\end{shaded}

\subsubsection{Downstream Task Behavior}
\label{sec:harness}

\begin{shaded}
A six-task lm-evaluation-harness~\citep{eleuther2024lmevalharness}
panel corroborates the PPL story. LAMBADA is the most
quantization-sensitive task; PIQA~\citep{bisk2020piqa} and
COPA~\citep{roemmele2011copa} are nearly flat across methods within
each model; per-task 95\% CI is $\pm 2$--$3$~pp at the evaluation
limits used, so within-model cross-method differences inside that
interval should be read as ties.
Per-task breakdowns and Pareto plots are in
\cref{app:per-model-tables}.

\paragraph{MMLU.}
\label{subsec:mmlu}
QAM-W-5.5 and BF16 scored on the full 57-subject
MMLU~\citep{hendrycks2021mmlu} panel match within $0.2$~pp on all
three models (\cref{tab:mmlu}), confirming the codec's quality envelope
extends beyond the perplexity metric.
\end{shaded}

\begin{table}[t]
\centering
\caption{MMLU macro-average accuracy (57 subjects,
\texttt{limit=300}). QAM-W-5.5 matches BF16 within $0.2$~pp on all
three models. \textit{Reader key:} {\up} higher is better.}
\label{tab:mmlu}
\setlength{\tabcolsep}{4pt}
\begin{tabular}{lrrr}
\toprule
model & BF16\up & QAM-W-5.5\up & $\Delta$ (pp) \\
\midrule
TinyLlama-1.1B       & 0.2489 & 0.2507 & $+0.18$ \\
Qwen2.5-3B           & 0.6700 & 0.6685 & $-0.15$ \\
Mistral-7B            & 0.6150 & 0.6153 & $+0.03$ \\
\bottomrule
\end{tabular}
\end{table}

\section{Conclusion}
\label{sec:conclusion}


\begin{shaded}
The activation-aware joint-2D variant of QAM-W reaches the BF16 quality
envelope at $\approx\!5.5$~bpw on all five models tested, with
$\Delta$PPL~$\le 0.4\%$ on each. Three assumptions underpin the codec:
(i)~weight-row coordinates are correlated and admit a homogenizing
orthogonal rotation; (ii)~the rotated pairs concentrate near a circular
Gaussian suitable for joint 2D Lloyd coding; (iii)~per-channel
activation magnitudes provide a rescaling that aligns codec error with
the low-eigenvalue directions of $M_x$. When any of these weakens ---
as on Qwen~2.5~3B --- all three are needed to recover quality.

In practice, QAM-W-5.5 ($\approx\!5.5$~bpw) is the quality-preserving
choice, matching SmoothQuant~\citep{xiao2024smoothquant} W8A8 at
$\approx\!32\%$ fewer weight bits. QAM-W-4 ($\approx\!4$~bpw) serves
as the iso-bitrate comparison against W4A16 baselines and directly demonstrates that joint 2D coding outperforms polar by 1–3 pp on quantization-tolerant models and by over 14 pp on the most quantization-sensitive one. QAM-W-3.5 ($\approx\!3.5$~bpw) is viable on quantization-tolerant architectures but not yet Pareto-competitive on sensitive ones.

More broadly, among uncalibrated quantizers, sensitivity is dominated 
by the model architecture: RTN costs +1.9\% on Mistral but +24.6\% on Qwen, a 12$\times$ difference from changing the model alone. Among calibrated methods, the spread is much smaller. Single-model comparisons can therefore mislead.
Natural follow-ups include a learned 2D codebook fit to each
checkpoint's empirical pair distribution and a fused
dequantization-matmul kernel to translate the bitrate saving into
inference latency gains.
\end{shaded}

\section*{Limitations}
\label{sec:limitations}

\paragraph{Scope of quantized parameters.} The cross-model headline tables
(\cref{sec:cross-model}) report bpw over MLP gate/up/down projections only;
attention weights and the LM head remain BF16. The MLP + attention
extension in \cref{sec:stage-d} adds the four self-attention
projections ($q$, $k$, $v$, $o$) for the BF16 reference and the four
strongest sub-8\,bpw quantized methods on Mistral 7B and shows the
ranking is preserved, but the cross-model tables themselves do not yet
include an MLP + attention row for TinyLlama and Qwen; the LM head
remains untouched in every reported configuration.

\paragraph{Codebook training cost.} Lloyd-Max codebook training for the joint
2D codec is a one-time cost of $\sim 5$~s per bitrate level. This is
negligible compared to inference, but it is non-zero and is performed on a
synthetic unit circular Gaussian rather than per-checkpoint --- a learned
codebook conditioned on the empirical post-rotation pair distribution might
buy additional precision.

\paragraph{Activation-aware $\alpha$ is grid-tuned, not learned.} The AWQ-style
per-channel scaling exponent $\alpha=0.3$ was chosen from a small grid
$\{0.0, 0.3, 0.5, 0.8\}$ on TinyLlama and held fixed across models. A
per-model or per-layer $\alpha$ schedule would likely improve quality on
Qwen 2.5, where the 7-bit codec is most $\alpha$-sensitive.

\paragraph{Architecture coverage.} The cross-model tables rest on three
instruction-tuned models from three families (TinyLlama, Qwen 2.5,
Mistral) plus two Llama-2 base models at 7B and 13B
(\cref{sec:frontier,sec:scale-13b}) --- five models from four families
across $1.1$B--$13$B parameters. Beyond this range, $70$B and
contemporary families (Llama 3, Gemma 2, DeepSeek) would further tighten
the architecture-sensitivity claim, particularly to test whether the
$\approx\!5.5$ bpw envelope continues to hold at the $70$B parameter
scale where activation statistics diverge further from the smaller-model
distributions calibrated against here. The choice to evaluate on these
architecturally conventional models is deliberate: all five are dense
decoder-only transformers with standard multi-head or grouped-query
attention, for which mature weight-loading and evaluation libraries make
the codec comparison free of model-loading confounds. Contemporary
(2026-era) architectures are not yet reliably supported end-to-end by
stock libraries --- e.g.\ Gemma-4 combines per-layer input embeddings
with interleaved local/global sliding-window attention and must be loaded
through a forked model implementation. Extending QAM-W to such
architectures, via a dedicated inference engine, is left to future work.

\paragraph{Iso-4-bpw quality trails the QTIP frontier.} \Cref{sec:frontier}
reports QTIP-4Bit ahead of QAM-W's joint-2D codec at iso-4-bpw on
Llama-2-7B-base; the joint-2D Lloyd codebook trades some
rate-distortion ceiling for training cheapness and decode simplicity
(a single $2^B$-entry table lookup rather than a lattice
nearest-point search). Closing the iso-4-bpw gap is the most natural
extension: candidate paths are a richer codebook training distribution
per checkpoint (\cref{sec:conclusion}) and tighter Lloyd optimization
with QTIP-style BlockLDLQ rounding.

\paragraph{Harness precision.} The six-task harness panel uses
\texttt{limit=300} (Qwen, Mistral) and \texttt{limit=500} (TinyLlama). The
resulting per-task accuracy noise floor is on the order of one to two
percentage points; cross-method differences smaller than that should be read
as ties.

\paragraph{No fused dequantization-matmul kernel.} The reported bpw is a
storage metric. The current pipeline dequantizes weights to BF16 before the
linear-layer matmul, so the bitrate saving shows up in weight storage and
host-to-device memory bandwidth, not in matmul arithmetic. The bpw numbers
should be read as memory-footprint claims until a fused kernel is available.
Building this kernel --- and with it support for architectures beyond the
conventional dense transformers studied here --- is the primary engineering
follow-up.

\paragraph{Comparison scope.} The evaluation covers five models from four
families and seven baselines under one unified protocol. Notably absent are
iso-bpw comparisons against learned-rotation methods (QuaRot, SpinQuant)
and a direct $d\!=\!1$ vs.\ $d\!=\!2$ ablation against
PolarQuant~\citep{vicentino2026polarquant}, which would isolate the
codebook-dimension contribution from the shared rotation-plus-scaling
pipeline. Scale beyond 13B and broader downstream benchmarks (MMLU-Pro,
BBH, IFEval, GSM8K) are also not covered.


\bibliography{references}

\appendix
\section{Codec-Level Analysis}
\label{sec:codec-analysis}

With the encoder/decoder fixed (\cref{sec:method}), the two codec operations
that determine directional reconstruction error are the block-Hadamard rotation
and the per-pair quantizer. Each is analyzed in turn: the rotation as an exact
isometry, and the polar amplitude-phase decomposition as the baseline against
which the joint 2D codebook is judged. Layer-output effects --- how this
directional error translates to perturbations under real activations --- are
deferred to \cref{sec:activation}.

\subsection{Analysis I: Rotation Isometry}

The normalized row is transformed by a sign-masked block-Hadamard rotation.
This rotation is an exact isometry in real arithmetic, so it cannot add
mathematical distortion by itself. Its practical role is to reduce coordinate
coherence and spread energy within each block before quantization; the
circular-Gaussian model used by the pair codebooks is calibrated empirically and
does not follow from orthogonality alone.

\begin{definition}[Unnormalized Walsh-Hadamard matrix]
For any integer $b = 2^m$ with $m \ge 0$, the unnormalized Walsh-Hadamard
matrix $H_b \in \{-1,+1\}^{b \times b}$ is defined recursively by $H_1=[1]$
and
\begin{equation}
  H_{2b} =
  \begin{bmatrix}
    H_b & H_b \\
    H_b & -H_b
  \end{bmatrix}.
\end{equation}
It satisfies $H_b^\top H_b = bI$ and $H_b=H_b^\top$.
\end{definition}

\begin{definition}[Block-normalized Hadamard transform]
Let $b$ be a power of two dividing $d_{\text{in}}$. The normalized block-Hadamard matrix
$F \in \mathbb{R}^{d_{\text{in}} \times d_{\text{in}}}$ is block diagonal with $d_{\text{in}}/b$ identical blocks,
each equal to $(1/\sqrt{b})H_b$:
\begin{equation}
  F =
  \frac{1}{\sqrt{b}}\,
  \underbrace{
    H_b \oplus \cdots \oplus H_b
  }_{d/b \text{ blocks}} .
\end{equation}
\end{definition}

\begin{definition}[Sign mask]
Let $s_1, \dots, s_{d_{\text{in}}} \in \{\pm 1\}$ be a deterministic sequence generated from a fixed 64-bit seed. The sign mask is the diagonal matrix
$S = \mathrm{diag}(s_1, \dots, s_{d_{\text{in}}})$.
\end{definition}

\begin{lemma}[Block-Hadamard rotation isometry]
\label{lem:rotation}
Assume $d_{\text{in}} = mb$ for integers $m \geq 1$ and $b = 2^q$ with $q \geq 1$.
Let $F = I_m \otimes b^{-1/2} H_b$ be the block-normalized Hadamard
matrix and $S = \mathrm{diag}(s_1, \dots, s_{d_{\text{in}}})$ with 
$s_j \in \{\pm 1\}$.
With $R_{\mathrm{fwd}} = FS$ and $R_{\mathrm{inv}} = SF$, both maps
are orthogonal in exact real arithmetic
($R_{\mathrm{fwd}}^\top R_{\mathrm{fwd}} = R_{\mathrm{inv}}^\top R_{\mathrm{inv}} = I$),
mutually inverse ($R_{\mathrm{inv}}R_{\mathrm{fwd}} = I$), and
preserve norms, distances, inner products, and per-block energy.
\end{lemma}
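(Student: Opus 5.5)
The plan is to verify everything by direct matrix algebra from the two definitions, working in order: $F$ is orthogonal and symmetric, then $S$ is orthogonal and symmetric, then the products $R_{\mathrm{fwd}}=FS$ and $R_{\mathrm{inv}}=SF$ inherit these properties, and finally the metric consequences follow.

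\textbf{Step 1: $F$ is symmetric and orthogonal.} From the Walsh-Hadamard definition, $H_b^\top=H_b$ and $H_b^\top H_b=bI_b$. Setting $\tilde H=b^{-1/2}H_b$ gives $\tilde H^\top=\tilde H$ and $\tilde H^\top\tilde H=I_b$. Using $(A\otimes B)^\top=A^\top\otimes B^\top$ and the mixed-product rule on $F=I_m\otimes\tilde H$:
\[
F^\top F=(I_m\otimes\tilde H)(I_m\otimes\tilde H)=I_m\otimes I_b=I_{d_{\mathrm{in}}},
\]
and $F^\top=F$. Equivalently, one can argue block by block on the block-diagonal form.

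\textbf{Step 2: $S$ is symmetric and orthogonal.} $S$ is diagonal with entries $s_j^2=1$, so $S^\top=S$ and $S^2=I$.

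\textbf{Step 3: the products.} Orthogonality:
\[
R_{\mathrm{fwd}}^\top R_{\mathrm{fwd}}=S^\top F^\top F S=S^2=I,\qquad
R_{\mathrm{inv}}^\top R_{\mathrm{inv}}=F^\top S^\top S F=F^\top F=I.
\]
Mutual inverse: $R_{\mathrm{inv}}R_{\mathrm{fwd}}=SF\,FS=S\,(F^2)\,S=S^2=I$, where $F^2=F^\top F=I$ uses the symmetry from Step 1. Since these are square matrices, $R_{\mathrm{fwd}}R_{\mathrm{inv}}=I$ as well, and $R_{\mathrm{inv}}=R_{\mathrm{fwd}}^\top$.

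\textbf{Step 4: metric consequences.} For any orthogonal $Q$, inner products are preserved: $\langle Qx,Qy\rangle=x^\top Q^\top Qy=\langle x,y\rangle$. Norms follow by taking $x=y$, and distances by applying this to $x-y$.

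\textbf{Per-block energy.} This is the only step needing care, because $S$ and $F$ must both act within blocks. Partition $x$ into $m$ contiguous blocks $x^{(i)}\in\mathbb{R}^b$, and write $S=\bigoplus_i S^{(i)}$ with each $S^{(i)}$ diagonal $\pm1$. Then
\[
(R_{\mathrm{fwd}}x)^{(i)}=\tilde H\,S^{(i)}x^{(i)},
\]
a product of two $b\times b$ orthogonal matrices applied to $x^{(i)}$. Hence $\|(R_{\mathrm{fwd}}x)^{(i)}\|=\|x^{(i)}\|$ for every $i$, and the same computation applies to $R_{\mathrm{inv}}$. The point to state explicitly is that the block partition is the one induced by $F$; the sign mask is diagonal and so respects any partition. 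Energy is therefore conserved within each block and is only redistributed among its coordinates, as claimed.
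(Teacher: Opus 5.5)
Your proposal is correct and follows essentially the same route as the paper's proof. Both arguments use that the normalized Hadamard blocks are symmetric and orthogonal, so $F^\top F = F^2 = I$, and that the $\pm 1$ diagonal gives $S^2 = I$; these yield $R_{\mathrm{fwd}}^\top R_{\mathrm{fwd}} = I$ and $R_{\mathrm{inv}}R_{\mathrm{fwd}} = SF^2S = I$, with the metric properties following from orthogonality. Your version is slightly more complete: you check $R_{\mathrm{inv}}^\top R_{\mathrm{inv}} = I$ explicitly and write out the blockwise identity $(R_{\mathrm{fwd}}x)^{(i)} = b^{-1/2}H_b S^{(i)} x^{(i)}$ behind the per-block energy claim, which the paper asserts in a single line.
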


\begin{proof}
$H_b^\top H_b = bI$ and $H_b=H_b^\top$, so each normalized block
$b^{-1/2}H_b$ is orthogonal and symmetric. $F$ is block diagonal with
these blocks, giving $F^\top F = I$ and $F^\top = F$. $S$ is diagonal
with $\pm 1$ entries, so $S^\top = S$ and $S^2 = I$. Then
$R_{\mathrm{fwd}}^\top R_{\mathrm{fwd}} = S F^\top F S = I$ and
$R_{\mathrm{inv}} R_{\mathrm{fwd}} = SF \cdot FS = S F^2 S = I$. Norm,
distance, and inner-product preservation follow from orthogonality;
per-block energy preservation follows because $F$ and $S$ act
independently within each block of the partition.
\end{proof}

\begin{remark}
The lemma is purely geometric. It proves that the rotation contributes
no mathematical distortion and that quantization error has the same
Euclidean norm before and after the inverse rotation. It does not prove that the rotated
coordinates are independent or Gaussian. If the signs are idealized as
independent Rademacher variables, then for a fixed vector $v$ and coordinate
$j$ in block $B(j)$,
\[
\begin{gathered}
  \mathbb{E}_S[(R_{\mathrm{fwd}}v)_j]=0, \\
  \mathbb{E}_S[(R_{\mathrm{fwd}}v)_j^2]=\frac{\|v_{B(j)}\|_2^2}{b}.
\end{gathered}
\]
This explains the energy-spreading intuition within each block. Approximate
Gaussianity requires the usual incoherence condition that no small number of
input coordinates dominates the signed sum, and is checked empirically through
calibration.
\end{remark}

\begin{remark}
The forward and inverse matrices are generally different because $F$ and $S$ do
not commute. The sign-masked transform is orthogonal, but it is not generally
symmetric and is not generally an involution.
\end{remark}

\subsection{Analysis II: Polar Pair Distortion}

The second codec-level theorem analyzes the polar baseline from
\cref{sec:method}. The theorem gives a deterministic amplitude-phase
decomposition; the following corollary adds the Rayleigh/Lloyd source model
used to compare independent amplitude-phase coding against joint 2D QAM.

\begin{theorem}[Polar QAM pair distortion decomposition]
\label{thm:pairwise}
For a complex pair $z=ae^{i\theta}$ and reconstruction
$\hat{z}=\hat{a}e^{i\hat{\theta}}$,
\begin{equation}
  \|z-\hat{z}\|^2
  = (a-\hat{a})^2
    + 2a\hat{a}\bigl(1-\cos(\theta-\hat{\theta})\bigr).
\end{equation}
If $a,\hat{a},\theta,\hat{\theta}$ are random variables with finite second
moments and $\delta=\operatorname{wrap}(\theta-\hat{\theta})$, then
\begin{equation}
  \mathbb{E}\|z-\hat{z}\|^2
  =
  \mathbb{E}(a-\hat{a})^2
  + 2\mathbb{E}\left[a\hat{a}\bigl(1-\cos\delta\bigr)\right].
\end{equation}
\end{theorem}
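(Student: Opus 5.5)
The plan is to prove the deterministic identity first and then obtain the expectation statement by taking expectations term by term.

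\textbf{Deterministic identity.} Identify $\mathbb{R}^2$ with $\mathbb{C}$, so that $\|z-\hat z\|^2=|z-\hat z|^2$. Expanding the squared modulus gives
\[
|z-\hat z|^2=|z|^2+|\hat z|^2-2\operatorname{Re}(z\overline{\hat z})=a^2+\hat a^2-2a\hat a\cos(\theta-\hat\theta),
\]
using $z\overline{\hat z}=a\hat a\,e^{i(\theta-\hat\theta)}$. This step holds for any real $a,\hat a$, with no sign assumption needed. Next write $a^2+\hat a^2=(a-\hat a)^2+2a\hat a$ and substitute. The terms regroup to $(a-\hat a)^2+2a\hat a\bigl(1-\cos(\theta-\hat\theta)\bigr)$, which is the first display. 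Equivalently, this is the law of cosines applied to the triangle $0,z,\hat z$.

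\textbf{Expectation statement.} First, $\cos$ is $2\pi$-periodic, and $\operatorname{wrap}(\theta-\hat\theta)$ differs from $\theta-\hat\theta$ by an integer multiple of $2\pi$. Hence $\cos\delta=\cos(\theta-\hat\theta)$ pointwise, and the deterministic identity holds with $\delta$ in place of $\theta-\hat\theta$ for every realization.

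Second, check integrability. By finite second moments, $(a-\hat a)^2$ is integrable, and so is $|z-\hat z|^2\le 2a^2+2\hat a^2$. For the cross term, $|a\hat a(1-\cos\delta)|\le 2|a\hat a|\le a^2+\hat a^2$, so it is integrable as well. With every term integrable, linearity of expectation applied to the pointwise identity gives the second display.

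\textbf{Main obstacle.} There is essentially none. The only points needing care are the wrap/periodicity remark and the integrability check that justifies splitting the expectation. No independence between amplitude and phase errors is needed, so none should be assumed.
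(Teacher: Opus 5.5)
Your proposal is correct and follows the paper's proof essentially line for line: expand the squared modulus to $a^2+\hat a^2-2a\hat a\cos(\theta-\hat\theta)$, regroup using $a^2+\hat a^2=(a-\hat a)^2+2a\hat a$, use the $2\pi$-periodicity of cosine to replace $\theta-\hat\theta$ by $\delta$, and take expectations. Your explicit integrability check, which bounds each term by a multiple of $a^2+\hat a^2$, is a small gain in rigor; the paper leaves it implicit when it splits the expectation.
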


\begin{proof}
The deterministic identity follows by expanding
\begin{align*}
  |ae^{i\theta}-\hat{a}e^{i\hat{\theta}}|^2
  &= a^2+\hat{a}^2-2a\hat{a}\cos(\theta-\hat{\theta}) \\
  &= (a-\hat{a})^2 \\
  &\quad +2a\hat{a}\bigl(1-\cos(\theta-\hat{\theta})\bigr).
\end{align*}
Because cosine is $2\pi$-periodic, replacing $\theta-\hat{\theta}$ by the
wrapped error $\delta$ does not change the expression. Taking expectations
gives the second claim.
\end{proof}

\begin{corollary}[Rayleigh-Lloyd polar distortion bound]
\label{cor:rayleigh-polar}
Assume the circular-Gaussian model for pair $k$: $a=\sigma_k U$ with
$U\sim\operatorname{Rayleigh}(1)$, $\theta$ is uniform on $(-\pi,\pi]$, and
$a$ and $\theta$ are independent. Let $Q_a$ be an $N_a$-level Lloyd-Max
quantizer for $U$ under squared error, with normalized distortion
$C_{\mathrm{LM}}(N_a)=\mathbb{E}(U-Q_a(U))^2$, and set
$\hat{a}=\sigma_k Q_a(U)$. Let the phase quantizer round $\theta$ to the
nearest one of $N_p$ uniform phase levels, with bin centers aligned to the
grid $\{(2k+1)h : k=0,1,\ldots,N_p-1\}$ modulo $2\pi$, where $h=\pi/N_p$
(so that adjacent phase bin boundaries are spaced by $2h$ and bin centers
sit at midpoints). Then
\begin{equation}
\begin{aligned}
  \mathbb{E}\|z-\hat{z}\|^2
  ={}& \sigma_k^2 C_{\mathrm{LM}}(N_a) \\
  &+ 2\sigma_k^2 M_a(N_a)\eta(N_p),
\end{aligned}
\end{equation}
where
\begin{equation}
\begin{gathered}
  M_a(N_a)=\mathbb{E}\bigl[UQ_a(U)\bigr], \\
  \eta(N_p)=1-\frac{\sin(\pi/N_p)}{\pi/N_p}.
\end{gathered}
\end{equation}
For a centroid Lloyd-Max quantizer, $M_a(N_a)=2-C_{\mathrm{LM}}(N_a)$, hence
\begin{equation}
\begin{aligned}
  \mathbb{E}\|z-\hat{z}\|^2
  = \sigma_k^2\Bigl[
    & C_{\mathrm{LM}}(N_a) \\
    &+ 2\bigl(2-C_{\mathrm{LM}}(N_a)\bigr)\eta(N_p)
  \Bigr]
\end{aligned}
\end{equation}
and
\begin{equation}
\begin{aligned}
  \mathbb{E}\|z-\hat{z}\|^2
  \le \sigma_k^2\Bigl[
    & C_{\mathrm{LM}}(N_a) \\
    &+ \bigl(2-C_{\mathrm{LM}}(N_a)\bigr)\frac{\pi^2}{3N_p^2}
  \Bigr].
\end{aligned}
\end{equation}
\end{corollary}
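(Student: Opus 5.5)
The plan is to start from the expectation form of \cref{thm:pairwise} and evaluate its two terms under the stated source model. The amplitude term is immediate: $\mathbb{E}(a-\hat a)^2=\sigma_k^2\,\mathbb{E}(U-Q_a(U))^2=\sigma_k^2 C_{\mathrm{LM}}(N_a)$. For the phase term I will exploit independence. $\hat a=\sigma_k Q_a(U)$ depends only on $U$. The wrapped error $\delta$ depends only on $\theta$ through the phase rounding. Since $a$ and $\theta$ are independent, $\mathbb{E}[a\hat a(1-\cos\delta)]=\sigma_k^2\,\mathbb{E}[UQ_a(U)]\cdot\mathbb{E}[1-\cos\delta]=\sigma_k^2 M_a(N_a)\,\mathbb{E}[1-\cos\delta]$.

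Next I compute $\mathbb{E}[1-\cos\delta]$. Because $\theta$ is uniform on the circle and the $N_p$ bins are congruent arcs of width $2h$ with centers at the midpoints, $\delta$ conditional on any bin is uniform on $[-h,h]$. Hence it is uniform on $[-h,h]$ unconditionally. Then
\[
\mathbb{E}\cos\delta=\frac{1}{2h}\int_{-h}^{h}\cos t\,dt=\frac{\sin h}{h}.
\]
With $h=\pi/N_p$ this gives $\mathbb{E}[1-\cos\delta]=\eta(N_p)$, and the first displayed identity follows.

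For the Lloyd-Max simplification I use the centroid condition: on each cell $\{U\in\mathcal{R}_j\}$ we have $Q_a(U)=c_j=\mathbb{E}[U\mid U\in\mathcal{R}_j]$. This yields $\mathbb{E}[UQ_a(U)]=\mathbb{E}[Q_a(U)^2]$, and so
\[
C_{\mathrm{LM}}=\mathbb{E}U^2-2\,\mathbb{E}[UQ_a]+\mathbb{E}Q_a^2=\mathbb{E}U^2-\mathbb{E}[UQ_a].
\]
For $U\sim\operatorname{Rayleigh}(1)$ we have $\mathbb{E}U^2=2$, since $U^2/2$ is standard exponential. Therefore $M_a=2-C_{\mathrm{LM}}$, and substituting gives the second identity.

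The inequality uses $1-\sin x/x\le x^2/6$ for $x>0$. This follows from $\sin x\ge x-x^3/6$, which holds because $g(x)=\sin x-x+x^3/6$ has $g(0)=g'(0)=g''(0)=0$ and $g'''(x)=1-\cos x\ge 0$. It gives $2\eta(N_p)\le \pi^2/(3N_p^2)$. Since $2-C_{\mathrm{LM}}=\mathbb{E}Q_a^2\ge 0$, multiplying by this factor preserves the inequality.

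The main obstacle is bookkeeping rather than depth. First, the claim that $\delta$ is exactly uniform on $[-h,h]$ needs uniformity of $\theta$ and equal-width bins; the wrap at $\pm\pi$ has to be handled by noting that the bin containing the branch cut is still a single arc modulo $2\pi$. Second, the independence factorization requires $\hat\theta$ to be a function of $\theta$ alone, which holds for the polar baseline.
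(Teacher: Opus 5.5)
Your proposal is correct and follows the paper's proof essentially step for step: the amplitude term comes from the definition of $C_{\mathrm{LM}}$, the phase term from the independence factorization with $\delta$ uniform on $[-h,h]$, $M_a=2-C_{\mathrm{LM}}$ from the centroid identity $\mathbb{E}[UQ_a(U)]=\mathbb{E}[Q_a(U)^2]$ together with $\mathbb{E}U^2=2$, and the final inequality from $1-\sin h/h\le h^2/6$. Two of your details are slightly tidier than the paper's version: your third-derivative proof of $\sin x\ge x-x^3/6$ holds for all $x\ge 0$, where the paper uses an alternating-series remark, and you state explicitly that $2-C_{\mathrm{LM}}(N_a)=\mathbb{E}[Q_a(U)^2]\ge 0$, which the paper needs but leaves implicit when multiplying the inequality.
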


\begin{proof}
The amplitude contribution is
$\mathbb{E}(a-\hat{a})^2=\sigma_k^2 C_{\mathrm{LM}}(N_a)$ by the definition
of the normalized Lloyd-Max distortion.
Under the circular-Gaussian model, amplitude and phase are independent.
Because $\theta$ is uniform on $(-\pi,\pi]$ and the phase quantizer is uniform
with bin centers aligned to multiples of $2h$ (the hypothesis on the phase
grid), nearest-level rounding produces a wrapped phase error $\delta$ that is
uniform on $[-h,h]$ with $h=\pi/N_p$, independent of $a$ and $\hat{a}$. Hence
\begin{equation}
\begin{aligned}
  \mathbb{E}(1-\cos\delta)
  &= 1-\frac{1}{2h}\int_{-h}^{h}\cos t\,dt \\
  &= 1-\frac{\sin h}{h}
   = \eta(N_p).
\end{aligned}
\end{equation}
The phase contribution is therefore
$2\mathbb{E}[a\hat{a}]\eta(N_p)
=2\sigma_k^2M_a(N_a)\eta(N_p)$.

For a centroid Lloyd-Max quantizer, $Q_a(U)=\mathbb{E}[U\mid Q_a(U)]$ on each
cell. By the tower property and the centroid identity, and noting that
$Q_a(U)$ is $\sigma(Q_a(U))$-measurable so it pulls inside the conditional
expectation,
\begin{equation}
\begin{aligned}
  M_a(N_a) &=\mathbb{E}[UQ_a(U)] \\
  &=\mathbb{E}\bigl[\mathbb{E}[U\mid Q_a(U)]\cdot Q_a(U)\bigr] \\
  &=\mathbb{E}[Q_a(U)^2].
\end{aligned}
\end{equation}
Since $\mathbb{E}U^2=2$ for the unit Rayleigh law,
\begin{equation}
\begin{aligned}
  C_{\mathrm{LM}}(N_a)
  &= \mathbb{E}(U-Q_a(U))^2 \\
  &= \mathbb{E}U^2 - \mathbb{E}[Q_a(U)^2]
   = 2-M_a(N_a),
\end{aligned}
\end{equation}
which gives $M_a(N_a)=2-C_{\mathrm{LM}}(N_a)$. Finally, the bound
$1-\sin h/h\le h^2/6$ follows from the Taylor expansion
$\sin h=h-h^3/6+O(h^5)$, valid for $h\in[0,\pi]$ (the remainder is
non-negative on this interval since the next term is $+h^5/120$ and the
series is alternating with decreasing magnitude). Substituting
$h=\pi/N_p$ gives
$2(2-C_{\mathrm{LM}}(N_a))\eta(N_p)
\le(2-C_{\mathrm{LM}}(N_a))\pi^2/(3N_p^2)$.
\end{proof}

\begin{remark}
The bound is an idealized source-model statement. The implementation uses a
finite Rayleigh integration cutoff $R_{\max}=8$ and f32 codebook levels, so its
$C_{\mathrm{LM}}$ is the numerically integrated distortion of that practical
codebook. The Rayleigh tail beyond $8\sigma_k$ is negligible for the reported
bit budgets, but the exact equality above should be read as the population
calculation for the ideal Rayleigh source and nearest uniform phase quantizer.
\end{remark}

\subsection{Rotated-Domain Error Accounting}

Each complex pair contains two scalar coordinates. Because the inverse rotation
is an isometry, the squared error of the unnormalized inverse-rotated direction
is the sum of pairwise errors:
\begin{equation}
\begin{gathered}
  \|u-\tilde{u}\|_2^2
  = \|R_{\mathrm{fwd}}u-\hat{y}\|_2^2
  = \sum_k \|z_k-\hat{z}_k\|^2, \\
  \tilde{u}=R_{\mathrm{inv}}\hat{y}.
\end{gathered}
\end{equation}
If the decoder subsequently normalizes $\tilde{u}$ before multiplying by the
stored row norm, that final radial projection is a separate step; the equality
above applies to the linear inverse-rotation stage.

\begin{corollary}[Direction and per-coordinate reporting]
\label{cor:direction-reporting}
Let each pair error be
$e_k=\|z_k-\hat{z}_k\|^2$. Before the optional final direction
renormalization,
\begin{equation}
\begin{gathered}
  \|u-\tilde{u}\|_2^2 = \sum_k e_k, \\
  \operatorname{MSE}_{\mathrm{coord}}
  = \frac{1}{d}\sum_k e_k
  = \frac{1}{2}\operatorname{mean}_k(e_k).
\end{gathered}
\end{equation}
If $u$ and $\hat{u}$ are both unit vectors, then the cosine loss is exactly
\begin{equation}
  1-\langle u,\hat{u}\rangle
  =
  \frac{1}{2}\|u-\hat{u}\|_2^2 .
\end{equation}
\end{corollary}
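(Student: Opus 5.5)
The plan is to derive all three claims of \cref{cor:direction-reporting} from \cref{lem:rotation} and the definition of the pairing map.

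\textbf{Step 1: reduce the error to the rotated domain.} Write $y=R_{\mathrm{fwd}}u$ and $\tilde{u}=R_{\mathrm{inv}}\hat{y}$. By \cref{lem:rotation}, $R_{\mathrm{inv}}R_{\mathrm{fwd}}=I$, so $u=R_{\mathrm{inv}}y$. Hence $u-\tilde{u}=R_{\mathrm{inv}}(y-\hat{y})$. Because $R_{\mathrm{inv}}$ is orthogonal, it preserves norms, which gives $\|u-\tilde{u}\|_2^2=\|y-\hat{y}\|_2^2$.

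\textbf{Step 2: split the rotated error into pairs.} The pairing $z_k=y_{2k}+iy_{2k+1}$ is a bijection between $\mathbb{R}^d$ and $\mathbb{C}^{d/2}$. The index sets $\{2k,2k+1\}$ partition the coordinates, and $|x+iy|^2=x^2+y^2$. So the squared Euclidean norm splits as
\[
\|y-\hat{y}\|_2^2=\sum_k\bigl[(y_{2k}-\hat{y}_{2k})^2+(y_{2k+1}-\hat{y}_{2k+1})^2\bigr]=\sum_k e_k .
\]
Here $\hat{y}$ is the vector obtained by unpairing the rescaled reconstructions $\hat{z}_k$, with the $\sigma_k$ rescaling already applied in the decoder. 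This is the first identity.

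\textbf{Step 3: per-coordinate MSE.} Divide the first identity by $d$. There are $d/2$ pairs, so $\frac{1}{d}\sum_k e_k=\frac{1}{2}\cdot\frac{1}{d/2}\sum_k e_k=\frac12\operatorname{mean}_k(e_k)$. This requires $d$ even, which holds because $d=mb$ with $b=2^q$ and $q\ge1$.

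\textbf{Step 4: cosine loss for unit vectors.} Expand $\|u-\hat{u}\|_2^2=\|u\|^2+\|\hat{u}\|^2-2\langle u,\hat{u}\rangle$. Setting both norms to $1$ gives $\|u-\hat{u}\|_2^2=2-2\langle u,\hat{u}\rangle$, and rearranging yields $1-\langle u,\hat{u}\rangle=\frac12\|u-\hat{u}\|_2^2$.

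Steps 3 and 4 are routine algebra. The only point needing care is bookkeeping: the first two claims concern $\tilde{u}$ before the optional renormalization, while the cosine identity concerns the renormalized $\hat{u}$. These must be stated separately, because $\tilde{u}$ is generally not a unit vector.
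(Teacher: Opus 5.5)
Your proposal is correct and follows the paper's route: the paper cites the rotated-domain equality $\|u-\tilde{u}\|_2^2=\|R_{\mathrm{fwd}}u-\hat{y}\|_2^2=\sum_k\|z_k-\hat{z}_k\|^2$ (isometry of the inverse rotation plus the pair split), divides by the $d=2\,\#\{k\}$ coordinates, and expands the squared distance between unit vectors, which are your Steps 1--4. You also spell out three points the paper leaves implicit: $R_{\mathrm{inv}}R_{\mathrm{fwd}}=I$, the coordinate-partition argument for the pair split, and the evenness of $d$.
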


\begin{proof}
The first identity is the rotated-domain error equality above, and the
per-coordinate statement only divides by the $d=2\,\#\{k\}$ scalar coordinates.
For unit vectors,
$\|u-\hat{u}\|_2^2=\|u\|_2^2+\|\hat{u}\|_2^2-2\langle u,\hat{u}\rangle
=2(1-\langle u,\hat{u}\rangle)$.
\end{proof}

\subsection{Empirical Check of the Circular-Gaussian Model}
\label{subsec:qq-check}

\Cref{cor:rayleigh-polar} and the Lloyd codebook training in
\cref{subsec:complex} both invoke the circular-Gaussian model for the
post-rotation pair distribution. Whether this model is a useful idealization
of empirical transformer weights is a separable question that can be
checked directly. For each of the three target architectures, one
representative MLP \texttt{down\_proj} matrix is processed with the
same pipeline used by the codec: a sign-masked block-Hadamard rotation
at $b=1024$ block size, per-row unit normalization, and sampling of
$50{,}000$ rotated complex pairs. The marginal distributions of the
magnitude $|z|$ are then compared to a fitted Rayleigh law, and of the
real part $\Re(z)$ to a fitted zero-mean Gaussian, via QQ plots
(\cref{fig:qq-rotated-pairs}).

\begin{figure*}[t]
\centering
\includegraphics[width=\linewidth]{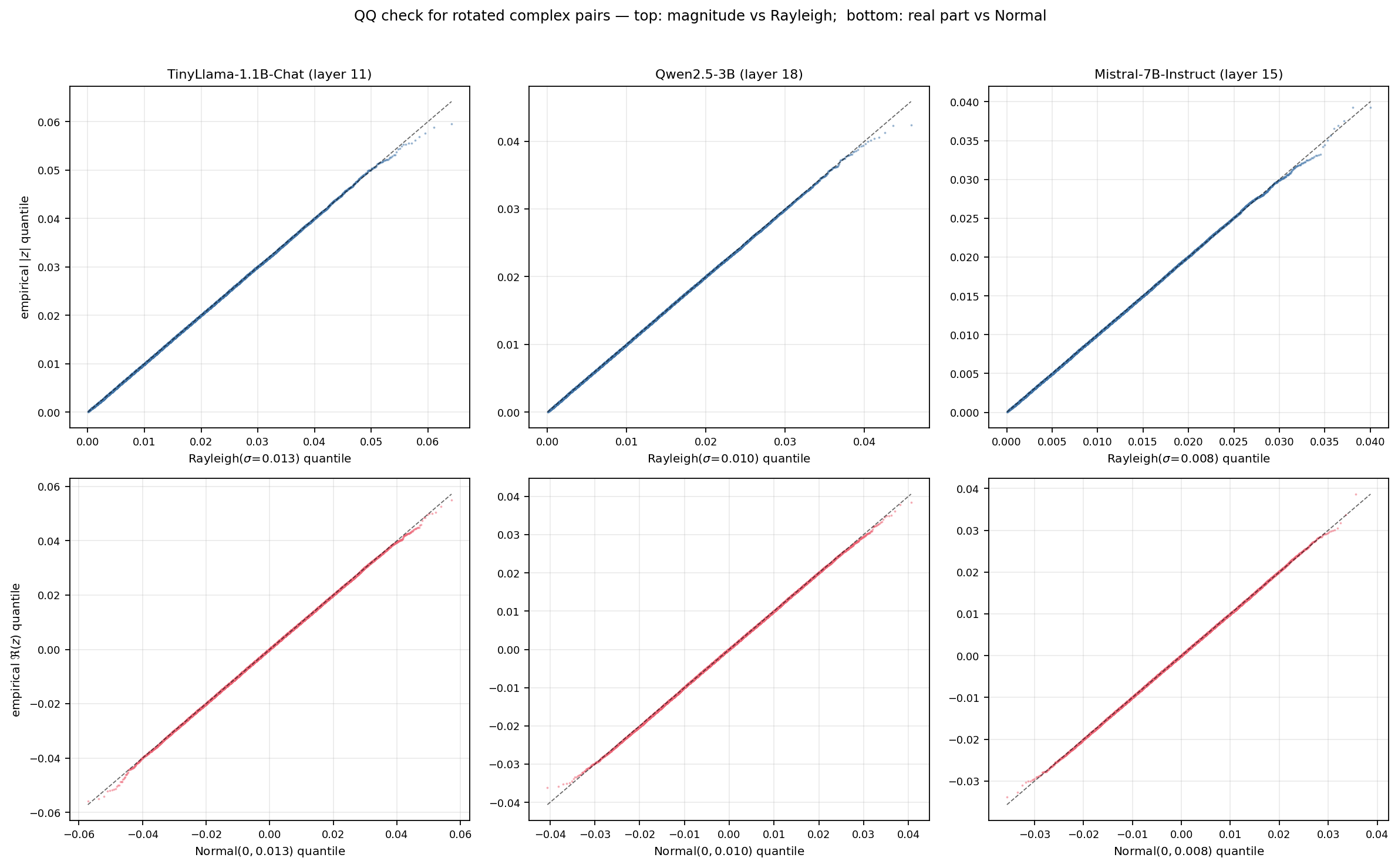}
\caption{QQ plots for the rotated complex-pair marginals of a representative
\texttt{mlp.down\_proj} matrix on each of the three target architectures.
Top row, per architecture: empirical $|z|$ quantiles against a Rayleigh law
fitted on the same sample. Bottom row: empirical $\Re(z)$ quantiles against
a zero-mean Gaussian fitted on the same sample. Sample size per panel is
$50{,}000$ rotated pairs collected from one representative MLP \texttt{down\_proj}
after a sign-masked block-Hadamard rotation at the same $b{=}1024$ block size
used by the codec. The traces lie on the $y{=}x$ identity through the body of
the distribution and depart only in the far tail. The circular-Gaussian
model used by \cref{cor:rayleigh-polar} is therefore a reasonable working
idealization in the regime that dominates the Lloyd objective.}
\label{fig:qq-rotated-pairs}
\end{figure*}

\Cref{fig:qq-rotated-pairs} shows the empirical check: one representative
\texttt{mlp.down\_proj} matrix per architecture, $50{,}000$ rotated complex
pairs after the same sign-masked block-Hadamard rotation the codec uses.
The body of the distribution tracks the Rayleigh (magnitude) and zero-mean
Gaussian (real-part) marginals closely; departures appear only in the far
tail, which the Lloyd training implicitly down-weights through the
density-weighted MSE objective. The post-rotation distribution is therefore
close enough to the circular-Gaussian model that the analytical results
based on it are an informative guide to codec behaviour, not a placeholder
assumption.

\section{Layer Output Error and Activation Awareness}
\label{sec:activation}

\cref{sec:codec-analysis} established that the codec's reconstruction
error is directional and controlled, to first order, by the per-pair
Lloyd distortion. Quality at the
model level, however, is not measured in weight space directly: a quantized
layer is judged by how its output deviates from the BF16 output under the
activations it actually sees. This section bridges weight error to
layer-output error via a closed-form trace identity, then uses that identity
to motivate AWQ-style per-channel scaling.

\paragraph{Notation.} Throughout this and the following sections,
$\hat M_x = X^\top X$ is the unnormalized activation second-moment matrix
($X\in\mathbb{R}^{n\times d_{\mathrm{in}}}$, with $n$ activation rows in
the calibration batch), and $M_x = \hat M_x / n$ is its per-token-normalized
form. Both forms appear; $\hat M_x$ in finite-sample empirical identities
and $M_x$ when stating per-sample expectations or when referring to a
random activation row's second moment.

\subsection{Activation-Weighted Layer Error}

\begin{proposition}[Layer output error identity]
\label{prop:layer}
Let $W,\hat{W}\in\mathbb{R}^{d_{\mathrm{out}}\times d_{\mathrm{in}}}$ with
$\Delta W=W-\hat{W}$, and let
$X\in\mathbb{R}^{n\times d_{\mathrm{in}}}$ be a row-major matrix of layer-input
activation vectors with $n\ge1$. The identity conditions on the same input
activations $X$ being supplied to the original and quantized layer; if earlier
quantized layers perturb $X$, that propagation effect is outside this
single-layer statement. Define the unnormalized activation second-moment matrix
$\hat{M}_x=X^\top X$. Then
\begin{equation}
\begin{aligned}
  \|XW^\top-X\hat{W}^\top\|_F^2
  &= \|X\Delta W^\top\|_F^2 \\
  &= \operatorname{Tr}(\Delta W \hat{M}_x \Delta W^\top).
\end{aligned}
\end{equation}
With $M_x=\frac{1}{n}X^\top X$, the mean per-sample squared output error is
\begin{equation}
  \frac{1}{n}\|X\Delta W^\top\|_F^2
  =
  \operatorname{Tr}(\Delta W M_x \Delta W^\top).
\end{equation}
Equivalently, for a random activation row $x\in\mathbb{R}^{d_{\mathrm{in}}}$
with finite second moment $M_x=\mathbb{E}[x^\top x]$,
\begin{equation}
  \mathbb{E}\left[\|xW^\top-x\hat{W}^\top\|_2^2\right]
  =
  \operatorname{Tr}(\Delta W M_x \Delta W^\top).
\end{equation}
\end{proposition}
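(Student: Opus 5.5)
The plan is a direct linear-algebra computation resting on two facts: $\|A\|_F^2=\operatorname{Tr}(A^\top A)=\operatorname{Tr}(AA^\top)$ and the cyclic property of the trace. No earlier result of the paper is needed beyond these.

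\textbf{First identity.} Linearity gives $XW^\top-X\hat{W}^\top=X(W-\hat W)^\top=X\Delta W^\top$. Setting $A=X\Delta W^\top\in\mathbb{R}^{n\times d_{\mathrm{out}}}$, we have $\|A\|_F^2=\operatorname{Tr}(A^\top A)=\operatorname{Tr}(\Delta W X^\top X\Delta W^\top)=\operatorname{Tr}(\Delta W\hat M_x\Delta W^\top)$. As a cross-check, one can also write it row-wise: $\|X\Delta W^\top\|_F^2=\sum_{t=1}^n\|x_t\Delta W^\top\|_2^2$, where $x_t$ is the $t$-th row of $X$. Each term equals $x_t\Delta W^\top\Delta W x_t^\top$, and summing gives $\operatorname{Tr}(\Delta W^\top\Delta W\,X^\top X)$, which matches by cyclicity.

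\textbf{Normalized form.} The second display follows by dividing by $n$ and using linearity of the trace, with $M_x=\hat M_x/n$.

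\textbf{Population form.} For a random row vector $x$, $\|x\Delta W^\top\|_2^2=\operatorname{Tr}(\Delta W x^\top x\Delta W^\top)$, a finite linear combination of the entries of $x^\top x$. Finite second moments make its expectation finite, and expectation commutes with the trace and with the fixed matrices $\Delta W$. This gives $\operatorname{Tr}(\Delta W\,\mathbb{E}[x^\top x]\,\Delta W^\top)$. The empirical statement is then the special case where $x$ is uniform over the rows of $X$.

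There is no real obstacle. The only care point is bookkeeping of the row-vector convention, $x\in\mathbb{R}^{1\times d_{\mathrm{in}}}$, so that $M_x=\mathbb{E}[x^\top x]$ is $d_{\mathrm{in}}\times d_{\mathrm{in}}$ and all products conform, and noting that the identity assumes the same $X$ is fed to both layers, as the statement specifies.
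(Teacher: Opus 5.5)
Your proof is correct and essentially the paper's argument: both write $\|X\Delta W^\top\|_F^2$ as a trace and arrive at $\operatorname{Tr}(\Delta W X^\top X\Delta W^\top)$, divide by $n$ for the normalized form, and pass the expectation through the trace for the population form. The only differences are cosmetic: you use $\operatorname{Tr}(A^\top A)$, which gives the result with no cyclic step, where the paper uses $\operatorname{Tr}(AA^\top)$ plus cyclicity, and your integrability remark and uniform-over-rows specialization are sound details the paper leaves implicit.
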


\begin{proof}
\begin{align*}
  \|X\Delta W^\top\|_F^2
  &= \operatorname{Tr}\left[(X\Delta W^\top)(X\Delta W^\top)^\top\right] \\
  &= \operatorname{Tr}(X\Delta W^\top\Delta W X^\top) \\
  &= \operatorname{Tr}(\Delta W X^\top X \Delta W^\top).
\end{align*}
Dividing by $n$ gives the empirical mean. The random-activation statement is
the same identity with $M_x=\mathbb{E}[x^\top x]$.
\end{proof}

This identity shows that the relevant layer objective is not unweighted weight
MSE, $\|\Delta W\|_F^2$, but activation-weighted error on the calibration
activations. Unweighted MSE is recovered only when $M_x$ is proportional to the
identity. The matrix $M_x$ is an uncentered second moment; it equals the
covariance only when activations have zero mean. The implementation reports the
relative layer-output RMSE
\begin{equation}
  \frac{\|XW^\top-X\hat{W}^\top\|_F}{\|XW^\top\|_F},
\end{equation}
which is the proposition's numerator normalized by the BF16 layer-output norm.

\begin{corollary}[Diagonal activation-RMS surrogate]
\label{cor:diagonal-rms}
If the normalized activation second moment is diagonal,
$M_x=\operatorname{diag}(r_1^2,\ldots,r_{d_{\mathrm{in}}}^2)$, then
\begin{equation}
  \frac{1}{n}\|X\Delta W^\top\|_F^2
  =
  \sum_{i=1}^{d_{\mathrm{out}}}
  \sum_{j=1}^{d_{\mathrm{in}}}
  r_j^2(\Delta W_{ij})^2 .
\end{equation}
For empirical calibration activations,
$r_j^2=n^{-1}\sum_{t=1}^n X_{tj}^2$ is the squared RMS of input channel $j$.
\end{corollary}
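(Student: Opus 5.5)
The plan is to specialize the second form of \cref{prop:layer} and then expand the trace entrywise. By \cref{prop:layer}, the normalized identity gives
\[
\frac{1}{n}\|X\Delta W^\top\|_F^2=\operatorname{Tr}(\Delta W M_x\Delta W^\top),
\qquad M_x=\tfrac{1}{n}X^\top X .
\]
So the whole statement reduces to evaluating this trace when $M_x=\operatorname{diag}(r_1^2,\ldots,r_{d_{\mathrm{in}}}^2)$. No appeal to the codec-level results is needed; the corollary is purely algebraic.

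To evaluate the trace, I will write $D=\operatorname{diag}(r_j^2)$ and compute it entrywise as
\[
\operatorname{Tr}(\Delta W D\Delta W^\top)=\sum_{i}(\Delta W D\Delta W^\top)_{ii}=\sum_i\sum_{j,l}\Delta W_{ij}D_{jl}\Delta W_{il}.
\]
Because $D$ is diagonal, $D_{jl}=r_j^2\mathbf{1}[j=l]$, so the double sum over $j,l$ collapses to $\sum_j r_j^2\Delta W_{ij}^2$. Summing over $i$ then gives the claimed double sum. An equivalent route is to factor $D=D^{1/2}D^{1/2}$, which shows the trace equals $\|\Delta W D^{1/2}\|_F^2$. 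The entries of $\Delta W D^{1/2}$ are $r_j\Delta W_{ij}$, so this is just a column rescaling.

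The last sentence identifies the diagonal entries. It is a direct reading of the definition of $M_x$:
\[
(M_x)_{jj}=\frac{1}{n}(X^\top X)_{jj}=\frac{1}{n}\sum_{t=1}^n X_{tj}^2 .
\]
This is the squared RMS of input channel $j$, whether or not the off-diagonal entries vanish. Under the hypothesis that they do vanish, the diagonal form is exact with these $r_j^2$.

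There is no genuine obstacle. The only point needing care is notational: the paper writes $M_x=\mathbb{E}[x^\top x]$ for a row vector $x$, and this must be kept consistent with $d_{\mathrm{in}}\times d_{\mathrm{in}}$ dimensions. I will also note that the identity is an exact statement conditional on diagonality, and is only a surrogate when $M_x$ has off-diagonal mass.
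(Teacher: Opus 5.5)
Your proposal is correct and follows the same route as the paper. Both substitute the diagonal $M_x$ into the normalized form of \cref{prop:layer}, expand $\operatorname{Tr}(\Delta W M_x \Delta W^\top)=\sum_i(\Delta W M_x\Delta W^\top)_{ii}$ to obtain $\sum_{i,j} r_j^2(\Delta W_{ij})^2$, and read $r_j^2$ off as the $j$th diagonal entry of $n^{-1}X^\top X$.
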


\begin{proof}
Substitute the diagonal $M_x$ into \cref{prop:layer}:
\[
\begin{aligned}
  \operatorname{Tr}(\Delta W M_x\Delta W^\top)
  &= \sum_i (\Delta W M_x\Delta W^\top)_{ii} \\
  &= \sum_{i,j} r_j^2(\Delta W_{ij})^2 .
\end{aligned}
\]
The empirical formula for $r_j^2$ is the $j$th diagonal entry of
$n^{-1}X^\top X$.
\end{proof}

When $M_x$ is not diagonal, this expression is a diagonal approximation that
ignores cross-channel activation correlations in the current input basis. It
still gives the operative AWQ-style intuition: errors in high-RMS channels
matter more to the layer output than equal-sized errors in low-RMS channels.

\subsection{Activation-Aware Scaling}

QAM-W uses AWQ-inspired per-input-channel scaling before quantization. For scaling
factors $s_j>0$ and $D_s=\operatorname{diag}(s_1,\ldots,s_{d_{\mathrm{in}}})$,
\begin{align}
  W_{\mathrm{scaled}} &= W D_s, \\
  \hat{W}_{\mathrm{scaled}} &= \operatorname{Quantize}(W_{\mathrm{scaled}}), \\
  \hat{W} &= \hat{W}_{\mathrm{scaled}}D_s^{-1}.
\end{align}
The implementation sets $s_j\propto r_j^\alpha$ with
$\alpha\in\{0.3,0.5,0.8\}$, normalizes the geometric mean of the scales to one,
and clamps scales to $[1/16,16]$.

\begin{proposition}[Sufficient-condition model for activation-aware scaling]
\label{prop:aware}
Let $E_s=W_{\mathrm{scaled}}-\hat{W}_{\mathrm{scaled}}$ be the quantization
error in the scaled domain, with expectation taken over the quantizer
randomness alone for a fixed (deterministic) weight matrix $W$. Assume:
\begin{itemize}
  \item[(A1)] \emph{Scale-invariant distortion:} the per-coordinate
              quantization error satisfies
              $\mathbb{E}[(E_s)_{ij}^2]\le c_{ij}$ for bounded constants
              $c_{ij}$ that do not depend on the scaling factors
              $s_1,\ldots,s_{d_{\mathrm{in}}}$.
  \item[(A2)] \emph{Diagonal activation second moment:}
              $M_x=\operatorname{diag}(r_1^2,\ldots,r_{d_{\mathrm{in}}}^2)$,
              as in \cref{cor:diagonal-rms}.
  \item[(A3)] \emph{Unclamped power-law scaling:} $s_j\propto r_j^\alpha$
              for some $\alpha\in[0,1]$, with the global geometric-mean
              normalization absorbed into $W$ and no per-channel clipping.
\end{itemize}
Then the de-scaled error $\Delta W^{(s)}=W-\hat{W}=E_sD_s^{-1}$ satisfies
$\mathbb{E}[(\Delta W_{ij}^{(s)})^2]
=\mathbb{E}[(E_s)_{ij}^2]/s_j^2\le c_{ij}/s_j^2$,
where the equality uses $\Delta W_{ij}^{(s)}=(E_s)_{ij}/s_j$ with $s_j$
deterministic and the inequality is (A1). Under (A2) and
\cref{cor:diagonal-rms}, the activation-weighted error satisfies
\begin{equation}
  \mathcal{E}_{\mathrm{act}}(s)
  =
  \sum_{i,j} r_j^2\,
  \mathbb{E}\bigl[(\Delta W_{ij}^{(s)})^2\bigr]
  \le
  \sum_{i,j} c_{ij}\frac{r_j^2}{s_j^2}.
\end{equation}
Under (A3) the channel factor changes from $r_j^2$ to $r_j^{2(1-\alpha)}$,
attenuating the activation-weighted contribution of high-RMS channels.
\end{proposition}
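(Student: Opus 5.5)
The argument is a direct computation in three steps: an entrywise identity for the de-scaled error, the activation weighting from \cref{cor:diagonal-rms}, and the power-law substitution under (A3).

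\emph{Step 1: entrywise de-scaling identity.} From the definitions, $\hat W=\hat W_{\mathrm{scaled}}D_s^{-1}$ and $W=W_{\mathrm{scaled}}D_s^{-1}$, since $D_s$ is invertible with $s_j>0$. Subtracting gives $\Delta W^{(s)}=(W_{\mathrm{scaled}}-\hat W_{\mathrm{scaled}})D_s^{-1}=E_sD_s^{-1}$. Right-multiplying by a diagonal matrix divides column $j$ by $s_j$, so $\Delta W^{(s)}_{ij}=(E_s)_{ij}/s_j$. The scales are computed from calibration activations and $W$ is fixed, so the $s_j$ are deterministic with respect to the quantizer randomness. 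Squaring and taking expectations therefore gives $\mathbb{E}[(\Delta W^{(s)}_{ij})^2]=\mathbb{E}[(E_s)_{ij}^2]/s_j^2$. Applying (A1) termwise then gives the bound $c_{ij}/s_j^2$.

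\emph{Step 2: activation weighting.} Under (A2), \cref{cor:diagonal-rms} holds for each realization of the quantizer: $\operatorname{Tr}(\Delta W^{(s)}M_x\Delta W^{(s)\top})=\sum_{i,j}r_j^2(\Delta W^{(s)}_{ij})^2$. The sum is finite and every term is nonnegative, so expectation commutes with summation. This yields $\mathcal{E}_{\mathrm{act}}(s)=\sum_{i,j}r_j^2\,\mathbb{E}[(\Delta W^{(s)}_{ij})^2]$. Multiplying the Step 1 bound by $r_j^2\ge 0$ and summing gives $\mathcal{E}_{\mathrm{act}}(s)\le\sum_{i,j}c_{ij}r_j^2/s_j^2$.

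\emph{Step 3: the exponent.} Under (A3), write $s_j=\kappa r_j^\alpha$ with a single global constant $\kappa>0$, which is the geometric-mean normalization. Then $r_j^2/s_j^2=\kappa^{-2}r_j^{2-2\alpha}$. The constant $\kappa^{-2}$ is common to all channels, and (A3) stipulates that it is absorbed into $W$ (equivalently into the $c_{ij}$, which by (A1) do not depend on the scales). The effective channel weight is therefore $r_j^{2(1-\alpha)}$ in place of $r_j^2$. For $\alpha\in(0,1]$ and $r_j>1$ we have $r_j^{2(1-\alpha)}<r_j^2$. More intrinsically, the ratio between the weights of two channels satisfies $(r_j/r_{j'})^{2(1-\alpha)}\le(r_j/r_{j'})^2$ whenever $r_j\ge r_{j'}$. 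This is the sense in which high-RMS channels are attenuated, and I would state the attenuation claim in this relative form.

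\emph{Main obstacle.} The mathematics is routine; the delicate point is interpretive. I would make explicit that the $\kappa$ normalization and the relative-weight phrasing are what make ``attenuating'' precise, so that the claim does not depend on whether $r_j>1$. I would also flag that (A1) is itself a modelling assumption: scaling changes the row norms and the Hadamard mixing, so the $c_{ij}$ are not literally scale-free in the actual codec. The result is therefore a sufficient-condition statement, not a property of QAM-W's quantizer.
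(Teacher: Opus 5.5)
Your proposal is correct and follows the paper's proof essentially step for step. Like the paper, you use the entrywise identity $\Delta W^{(s)}_{ij}=(E_s)_{ij}/s_j$ with deterministic $s_j$, apply (A1) termwise, substitute into the diagonal-$M_x$ identity of \cref{cor:diagonal-rms}, and conclude $r_j^2/s_j^2\propto r_j^{2(1-\alpha)}$ under (A3); your explicit $s_j=\kappa r_j^\alpha$ and ratio form of ``attenuation'' just make precise what the paper leaves as a one-line proportionality remark.
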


\begin{proof}
With $W$ fixed and the expectation taken over the quantizer randomness only,
$\Delta W_{ij}^{(s)}=(E_s)_{ij}/s_j$ with $s_j$ deterministic, so
$\mathbb{E}[(\Delta W_{ij}^{(s)})^2]=\mathbb{E}[(E_s)_{ij}^2]/s_j^2$.
Assumption (A1)---independence of $c_{ij}$ from $s$---is what makes this
division valid as a uniform bound across the choice of scales:
$\mathbb{E}[(\Delta W_{ij}^{(s)})^2]\le c_{ij}/s_j^2$. Substituting into the
diagonal-$M_x$ identity from \cref{cor:diagonal-rms} (which is (A2)) gives
$\mathcal{E}_{\mathrm{act}}(s)
=\sum_{i,j}r_j^2\mathbb{E}[(\Delta W_{ij}^{(s)})^2]
\le\sum_{i,j}c_{ij}r_j^2/s_j^2$.
Under (A3), $s_j\propto r_j^\alpha$ with global normalization absorbed into
$W$ and no clipping, so the channel factor $r_j^2/s_j^2$ is proportional to
$r_j^{2(1-\alpha)}$.
\end{proof}

The proposition is a design model rather than a universal guarantee; it
describes how (A1)--(A3) interact, and the deployed quantizer satisfies all
three only approximately. Real quantizers are nonlinear: codebook boundaries,
row renormalization, and non-uniform Lloyd levels all change the effective
source distribution, so (A1) fails when the constants $c_{ij}$ shift with $s$
under aggressive scaling. \Cref{app:prop-aware-a1} probes (A1) directly on
a representative MLP \texttt{down\_proj} layer of Mistral-7B and finds that
the median $c_j$ is essentially flat across $\alpha \in [0, 0.8]$ but the
maximum $c_j$ grows $\sim\!50\times$, so (A1) holds in the bulk and fails in
the tail; the deployed $\alpha=0.3$ sits well below the failure regime.
Activation-aware scaling is therefore treated as an objective-motivated
variant whose behaviour is checked empirically via layer-output RMSE and
paired KL rather than asserted from the design model.

\subsection{Empirical Check of the Layer-Output Identity}
\label{subsec:layer-rmse-check}

\Cref{prop:layer} is exact, so it does not need empirical validation as a
mathematical statement. The operational claim that follows from it ---
that unweighted weight Frobenius
$\|\Delta W\|_F / \|W\|_F$ is a worse proxy for what the model
actually computes than the activation-weighted layer-output RMSE
$\sqrt{\operatorname{Tr}(\Delta W M_x \Delta W^\top)
        / \operatorname{Tr}(W M_x W^\top)}$
--- does. \Cref{tab:layer-diag} reports both quantities (averaged across
the 96 MLP linear layers of Mistral-7B-Instruct-v0.3) for four methods
spanning two orders of magnitude in $\Delta$PPL\%. The Mx matrices are
computed from forward-pass activations of the BF16 reference on eight
WikiText-2 sequences of length 2048.

\begin{table*}[t]
\centering
\caption{Empirical anchor for \cref{prop:layer}: per-layer unweighted
weight Frobenius and activation-weighted layer-output RMSE, averaged over
the 96 MLP linear layers of Mistral 7B (checkpoints from the
MLP + attention quantization extension, \cref{sec:stage-d}). The \textit{amp.} column is the median per-layer ratio
$\rho_o / \rho_w$, where $\rho_w = \|\Delta W\|_F / \|W\|_F$ and
$\rho_o^2 = \operatorname{Tr}(\Delta W M_x \Delta W^\top) /
\operatorname{Tr}(W M_x W^\top)$. Mean KL and $\Delta$PPL\% are from
\cref{tab:stage-d}.\\
\textit{Reader key:} {\dn} lower is better. Shaded rows ({\starours})
are QAM-W (ours).}
\label{tab:layer-diag}
\setlength{\tabcolsep}{6pt}
\begin{tabular}{lrrrrr}
\toprule
method & $\rho_w$\dn & $\rho_o$\dn & amp.\,med. & mean KL\dn & $\Delta$PPL\%\dn \\
\midrule
SmoothQuant W8A8                 & 0.011 & 0.009 & 0.87 & 0.0008 & $+0.04\%$ \\
\our \starours QAM-W-5.5 & 0.033 & 0.025 & 0.83 & 0.0024 & $+0.11\%$ \\
GPTQ W4A16 g128                  & 0.135 & 0.070 & 0.57 & 0.0204 & $+1.43\%$ \\
AWQ W4A16 g128                   & 0.332 & 0.340 & 0.95 & 0.0310 & $+2.88\%$ \\
\bottomrule
\end{tabular}
\end{table*}

Two observations from \cref{tab:layer-diag} support the operational
reading of \cref{prop:layer}.

\paragraph{Method-dependent amplification.} The activation-weighting
amplification factor amp.\,med.\ varies by nearly $2\times$ across the
four methods. GPTQ's Hessian-aware optimization aligns its residual error
with low-$M_x$ directions and earns the largest discount (median
$\rho_o/\rho_w = 0.57$): the activation-weighted layer-output RMSE is
$\sim\!1.8\times$ smaller than the unweighted weight Frobenius would
suggest. AWQ does not earn the same discount (median $0.95$): its weight
error is roughly isotropic with respect to $M_x$, so the activation
weighting barely changes the apparent magnitude. SmoothQuant and the
activation-aware joint-2D codec sit in between
($0.87$ and $0.83$ respectively). At equal raw weight Frobenius, two
methods can therefore differ substantially on what the model actually
computes; weight Frobenius alone is not enough to rank them.

\paragraph{Per-layer spread.}
The amplification factor also has substantial \emph{within}-method
spread. For AWQ the per-layer ratio ranges from $0.55$ to
$5.53$ across the 96 layers (std.\ $0.50$); the maximum corresponds to
the embedding-adjacent MLP block where the AWQ error happens to project
disproportionately onto high-$M_x$ directions, and the model output
inherits the full hit. GPTQ's per-layer ratio, by contrast, sits in a
tight $[0.24, 0.69]$ band (std.\ $0.12$). This is direct evidence that
the layer-output identity is doing real work: two layers with the same
$\|\Delta W\|_F$ can produce layer-output error that differs by an order
of magnitude depending on how that error aligns with the calibration
$M_x$. The codec that explicitly accounts for $M_x$ via per-channel
scaling --- whether at calibration time (GPTQ) or at quantization time
(QAM-W with $\alpha=0.3$) --- earns a measurable reduction in $\rho_o$
that unweighted Frobenius cannot see.

\section{From Codec Error to Model Behavior}
\label{sec:model-behavior}

\Cref{prop:layer} maps weight error to layer-output error.
\Cref{sec:codec-analysis} earlier derived per-pair codec distortion.
This section closes the chain to model behavior with a local
logit-KL bridge, then composes the three steps into a single monotone
upper bound whose codec-dependent ingredient is the per-pair Lloyd
distortion $D_B$ on the unit circular Gaussian. The bound is tested
empirically in \cref{sec:cross-model,sec:harness}: if informative,
measured paired KL should be rank-correlated with $D_B$ across
methods at fixed architecture.

\subsection{Logit Perturbation and Local KL Bridge}

Let $f(x)$ be the BF16 logits and $\hat{f}(x)$ the logits under
quantized weights, with $\delta=\hat{f}(x)-f(x)$. If the transformer
computation after the perturbed linear layers is locally Lipschitz
around the evaluation activation trajectories, then the logit
perturbation is controlled by the accumulated layer-output
perturbations,
\begin{equation}
  \label{eq:logit-prop}
  \|\delta\|_2 \;\lesssim\; L\sum_\ell \|X_\ell\Delta W_\ell^\top\|_F .
\end{equation}
The constant $L$ is not estimated here because norm-only Lipschitz
bounds for transformers are usually too loose to be useful. The
structural point is that activation-weighted layer error is the
quantity that propagates, not raw weight MSE.

\begin{corollary}[Local logit-KL bridge]
\label{cor:kl-bridge}
Let $p=\operatorname{softmax}(f)$ and
$q=\operatorname{softmax}(f+\delta)$ with logit Hessian
$F_p=\operatorname{diag}(p)-pp^\top$. Then
\begin{equation}
  D_{\mathrm{KL}}(p\|q)
  = \tfrac{1}{2}\delta^\top F_p\delta + r(\delta),
\end{equation}
where the third-order Lagrange remainder satisfies
$|r(\delta)| \le \tfrac{1}{3}\|\delta\|_2^3$ (proof below; the softmax
cumulant tensor's operator-3-norm on the unit sphere is bounded by
$\sqrt{2}/2 \le 1$). Since $\|F_p\|_2 \le 1/2$, the local
approximation
$D_{\mathrm{KL}}(p\|q) \approx \tfrac{1}{4}\|\delta\|_2^2$ holds with
absolute error $\le \tfrac{1}{3}\|\delta\|_2^3$. Combining with
\eqref{eq:logit-prop} gives
\begin{equation}
  D_{\mathrm{KL}}(p\|q)
  \;\lesssim\;
  \frac{L^2}{4}
  \Bigl(\sum_\ell \|X_\ell\Delta W_\ell^\top\|_F\Bigr)^2,
\end{equation}
valid in the regime where the cubic remainder is small relative to
the quadratic term.
\end{corollary}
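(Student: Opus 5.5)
The plan is to Taylor-expand $g(\delta)=D_{\mathrm{KL}}(p\|\operatorname{softmax}(f+\delta))$ around $\delta=0$ and control the cubic remainder through the softmax cumulants. Write $A(f)=\log\sum_i e^{f_i}$ for the log-partition function. Then
\[
  D_{\mathrm{KL}}(p\|q)=\sum_i p_i\bigl(f_i-(f_i+\delta_i)\bigr)+A(f+\delta)-A(f)
  =A(f+\delta)-A(f)-p^\top\delta ,
\]
since $\nabla A(f)=p$. So the KL is exactly the Bregman divergence of $A$, and all the work reduces to Taylor expanding $A$.

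\textbf{Step 1 (quadratic term).} The zeroth- and first-order terms vanish. The Hessian of $A$ at $f$ is $\operatorname{diag}(p)-pp^\top=F_p$, which gives the $\tfrac12\delta^\top F_p\delta$ term.

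\textbf{Step 2 (cubic remainder).} Use the Lagrange form: $r(\delta)=\tfrac16 D^3A(f+t\delta)[\delta,\delta,\delta]$ for some $t\in(0,1)$. The third derivative of $A$ at any point $f'$ with $p'=\operatorname{softmax}(f')$ is the third cumulant of the categorical variable $e_I$, $I\sim p'$. Hence
\[
  D^3A(f')[\delta,\delta,\delta]=\mathbb{E}_{p'}\bigl[(\delta_I-\bar\delta)^3\bigr],
  \qquad \bar\delta=\textstyle\sum_i p'_i\delta_i .
\]
Bound it by
\[
  \bigl|\mathbb{E}(\delta_I-\bar\delta)^3\bigr|
  \le \max_i|\delta_i-\bar\delta|\,\operatorname{Var}_{p'}(\delta_I).
\]
The two factors are controlled as follows.
\begin{itemize}
  \item $\max_i|\delta_i-\bar\delta|\le \max_i\delta_i-\min_i\delta_i\le\sqrt2\,\|\delta\|_2$.
  \item $\operatorname{Var}_{p'}(\delta_I)\le\tfrac14(\max_i\delta_i-\min_i\delta_i)^2\le\tfrac12\|\delta\|_2^2$, by Popoviciu's inequality.
\end{itemize}
Together these give $|D^3A|\le \tfrac{\sqrt2}{2}\|\delta\|_2^3$, which is the constant claimed. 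Therefore $|r(\delta)|\le\tfrac{\sqrt2}{12}\|\delta\|_2^3\le\tfrac13\|\delta\|_2^3$. The statement's constant is loose, so I will not try to optimise it.

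\textbf{Step 3 (spectral bound and the local approximation).} Show $\|F_p\|_2\le\tfrac12$. For a unit vector $v$, $v^\top F_pv=\operatorname{Var}_p(v_I)\le\tfrac14(\max_i v_i-\min_i v_i)^2\le\tfrac12$, again by Popoviciu. Then
\[
  \tfrac12\delta^\top F_p\delta\le\tfrac14\|\delta\|_2^2 ,
\]
so $D_{\mathrm{KL}}\le\tfrac14\|\delta\|_2^2+\tfrac13\|\delta\|_2^3$.

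This last point is where I need to be careful. The statement's phrase ``$\approx\tfrac14\|\delta\|_2^2$ with error $\le\tfrac13\|\delta\|_2^3$'' is only literally true as an upper bound. The gap between $\tfrac12\delta^\top F_p\delta$ and $\tfrac14\|\delta\|_2^2$ is not cubic; for example, $\delta\propto\mathbf 1$ gives KL $=0$. I will therefore prove the one-sided inequality and interpret the approximation as that upper bound.

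\textbf{Step 4 (combining with the logit bound).} Substitute \eqref{eq:logit-prop}, $\|\delta\|_2\lesssim L\sum_\ell\|X_\ell\Delta W_\ell^\top\|_F$, into the quadratic term. Squaring the monotone bound gives the displayed estimate. The cubic term is dropped under the stated small-remainder regime, i.e.\ when $\|\delta\|_2\ll\tfrac34$ so that $\tfrac13\|\delta\|^3$ is dominated by $\tfrac14\|\delta\|^2$. This step inherits the heuristic status ($\lesssim$) of \eqref{eq:logit-prop} and needs no further argument.

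The main obstacle is Step 2: identifying the third derivative as a centred third moment and getting a uniform bound in $t$. Expressing everything through ranges of $\delta$ is what makes the bound independent of $p'$, and hence of the intermediate point $t$.
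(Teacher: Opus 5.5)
Your proposal is correct and follows the paper's proof essentially step for step. Both arguments Taylor-expand the log-partition with Hessian $F_p$, write the third derivative at the intermediate point as a centred third moment, and bound it by range times variance using Popoviciu together with $\max_i\delta_i-\min_i\delta_i\le\sqrt2\,\|\delta\|_2$; both reuse Popoviciu for $\|F_p\|_2\le\tfrac12$ and substitute \eqref{eq:logit-prop}. Your explicit Bregman identity, the sharper constant $\sqrt2/12$, and your caveat are sound refinements: the ``$\approx\tfrac14\|\delta\|_2^2$'' claim holds only as a one-sided upper bound, the paper's proof never justifies a two-sided reading, and $\delta\propto\mathbf{1}$ refutes it.
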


\begin{proof}
Taylor expansion of softmax cross-entropy around logits $f$. The
first-order term vanishes because KL is minimized at $q=p$, and the
Hessian is $F_p$. The operator-norm bound $\|F_p\|_2 \le 1/2$ follows
from Popoviciu's inequality applied to a unit-vector random variable
under $p$. For the cubic remainder, the third-cumulant tensor of the
softmax log-partition is $T_{ijk} = \kappa_3(p)_{ijk}$. For a unit
vector $v$ and $I \sim p$,
$T(v,v,v) = \mathbb{E}_p[(v_I - \mathbb{E}_p v_I)^3]$. Since $\|v\|_2 = 1$
we have $\max v - \min v \le \sqrt{2}$; combining
$|\mathbb{E}[(Y-\mathbb{E}Y)^3]| \le (\max - \min)\cdot\mathrm{Var}(Y)$
with Popoviciu's variance bound $\mathrm{Var}(Y) \le (\max - \min)^2/4$
gives $|T(v,v,v)| \le (\sqrt{2})^3/4 = \sqrt{2}/2 \le 1$. The Lagrange
remainder is $r(\delta) = \tfrac{1}{6}T(\xi)(\delta,\delta,\delta)$
for some $\xi$ on the segment, so
$|r(\delta)| \le \tfrac{1}{6}\,\|\delta\|^3 < \tfrac{1}{3}\|\delta\|^3$.
Substituting \eqref{eq:logit-prop} into the quadratic term gives the
final display.
\end{proof}

\begin{remark}
Realized KL can be much smaller than this bound because quantization
noise is not necessarily aligned with the leading directions of
$F_p$. We do not attempt to bound perplexity directly: it depends on
the evaluation distribution, token entropy, calibration, and
nonlinear error propagation, and any direct bound from
$\|\Delta W\|$ would be too loose to guide codec design. The
diagnostic chain
(\cref{thm:pairwise} $\to$ \cref{prop:layer} $\to$ \cref{cor:kl-bridge})
maps codec to KL; perplexity is reported as the end-to-end metric.
\end{remark}

\subsection{Composite Bound}
\label{sec:composite-bound}

Let $D_B$ denote the per-pair Lloyd-Max MSE of the $2^B$-point planar
codebook on the unit circular Gaussian (\cref{thm:pairwise}). For a
single weight row $w\in\mathbb{R}^{d_{\mathrm{in}}}$ with paired
coordinates $\{z_k\}$, each pair has calibrated scale $\sigma_k$
(\cref{subsec:complex}). Codebook lookup is performed on each pair
after rescaling toward the codebook's source domain.

\begin{assumption}[Linear codec response]
\label{ass:linear-codec}
For pair $k$ with per-coordinate calibration variance $\sigma_k^2$
from \cref{subsec:complex} (so $\mathbb{E}\,\|z_k\|^2 = 2\sigma_k^2$
under the circular-Gaussian model), the per-pair squared
reconstruction error scales with $\sigma_k^2$:
$\mathbb{E}\,\|\Delta z_k\|^2 \le \sigma_k^2\,D_B$,
where $D_B$ is the per-pair Lloyd-Max MSE on the unit circular
Gaussian (per-coordinate variance $1$) and the expectation is over
the quantizer applied to a row drawn from the calibration sample.
\end{assumption}

This is exact for high-rate Lloyd codebooks applied to scaled copies
of a fixed source~\citep{lloyd1982least,max1960quantizing,gersho1991vq,zador1982asymptotic}.
For empirical post-rotation pairs the assumption is an idealization;
the QQ check (\cref{fig:qq-rotated-pairs}) supports its accuracy in
the bulk and \cref{fig:prop-aware-a1} probes its failure modes in
the tail.

Summing \cref{ass:linear-codec} over pairs of a single row and using
Parseval (the block-Hadamard rotation is an isometry, so
$\sum_k \|z_k\|^2 = \|w\|^2$ deterministically; in expectation
$\sum_k 2\sigma_k^2 = \mathbb{E}\,\|w\|^2$, hence
$\sum_k \sigma_k^2 = \tfrac{1}{2}\mathbb{E}\,\|w\|^2$) gives
\begin{equation}
\label{eq:weight-mse}
  \mathbb{E}\,\|\Delta W\|_F^2 \;\le\; \tfrac{1}{2}\,D_B\,\mathbb{E}\,\|W\|_F^2 ,
\end{equation}
where the expectation on both sides is over the calibration sample
and the quantizer.
Plugging \eqref{eq:weight-mse} into the activation-weighted trace
identity (\cref{prop:layer}, random-activation form,
$M_x:=\mathbb{E}[x^\top x]$) and applying the PSD quadratic-form
bound $\operatorname{Tr}(A\,M\,A^\top) \le \lambda_{\max}(M)\,\|A\|_F^2$,
\begin{equation}
\label{eq:layer-bound}
  \rho_\ell^2
  := \mathbb{E}_x\!\left[\|x\,\Delta W_\ell^\top\|_2^2\right]
  \;\le\;
  \tfrac{1}{2}\,D_B\,\lambda_{\max}^{(\ell)}\,\|W_\ell\|_F^2 ,
\end{equation}
where $\lambda_{\max}^{(\ell)} := \lambda_{\max}(M_x^{(\ell)})$ and
$\rho_\ell$ is the per-token layer-output RMS at quantized layer $\ell$.
Summing over the $L_q$ quantized layers and applying Cauchy--Schwarz,
\begin{equation}
\label{eq:layer-sum}
\begin{gathered}
  \sum_{\ell=1}^{L_q} \rho_\ell
  \;\le\;
  \sqrt{\tfrac{1}{2}\,D_B\,L_q\,C_W}, \\
  C_W := \sum_{\ell=1}^{L_q}\lambda_{\max}^{(\ell)}\,\|W_\ell\|_F^2 .
\end{gathered}
\end{equation}
Composing with \cref{cor:kl-bridge} gives the composite bound below.

\begin{theorem}[Composite codec-to-KL upper bound is monotone in
$D_B$ and $C_W$]
\label{thm:composite}
Under the assumptions of \cref{prop:layer,cor:kl-bridge}, the upper
bound on the realized paired KL divergence between the BF16 and
quantized models obtained by chaining
\eqref{eq:layer-bound}, \eqref{eq:layer-sum}, and \cref{cor:kl-bridge}
is non-decreasing in two scalars: $D_B$ (the per-pair Lloyd distortion
of the codebook on the unit circular Gaussian) at fixed architecture,
and $C_W$ (a model-fixed constant set by the calibration activations
and weight norms) at fixed codec.
\begin{equation}
\begin{gathered}
  \mathbb{E}\,D_{\mathrm{KL}}(p\,\|\,q)
  \;\lesssim\;
  U\!\bigl(D_B,\,C_W\bigr), \\
  \partial U / \partial D_B \ge 0,\;
  \partial U / \partial C_W \ge 0 ,
\end{gathered}
\end{equation}
in the local regime where the cubic Taylor remainder of
\cref{cor:kl-bridge} is small relative to the quadratic term.
\end{theorem}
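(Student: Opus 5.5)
The plan is to build $U$ explicitly by chaining the three displayed inequalities, then read off monotonicity from its closed form. First fix a token position and write the logit perturbation bound \eqref{eq:logit-prop} in its per-token form, $\|\delta\|_2 \lesssim L\sum_\ell \rho_\ell$, with $\rho_\ell$ the per-token layer-output RMS from \eqref{eq:layer-bound}. Going from the finite-sample Frobenius norm to the per-token RMS uses the random-activation form of \cref{prop:layer}. Next, insert \eqref{eq:layer-sum}: $\sum_\ell\rho_\ell \le \sqrt{\tfrac12 D_B L_q C_W}$. Squaring and feeding this into the quadratic term of \cref{cor:kl-bridge}, together with $\|F_p\|_2\le 1/2$, gives
\[
  \mathbb{E}\,D_{\mathrm{KL}}(p\,\|\,q)\;\lesssim\;\frac{L^2}{4}\cdot\frac{1}{2}\,D_B\,L_q\,C_W \;+\; \mathbb{E}|r(\delta)|.
\]
Here the cubic remainder is bounded by $\tfrac13\|\delta\|_2^3 \le \tfrac13\bigl(L\sqrt{\tfrac12 D_B L_q C_W}\bigr)^3$. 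So I would define
\[
  U(D_B,C_W) := \tfrac{L^2 L_q}{8}\,D_B C_W + \tfrac{1}{3}\bigl(\tfrac{L^2 L_q}{2}\,D_B C_W\bigr)^{3/2},
\]
or, in the stated local regime, only its first term.

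Monotonicity is then immediate. $U$ is a sum of nonnegative powers of the product $D_BC_W$, with nonnegative coefficients depending only on $L$ and $L_q$. Hence $\partial U/\partial D_B = C_W\,g'(D_BC_W)\ge 0$ and $\partial U/\partial C_W = D_B\,g'(D_BC_W)\ge0$, where $g(t)=at+bt^{3/2}$ and $a,b\ge0$. It also needs $C_W\ge 0$, which holds because each $\lambda_{\max}^{(\ell)}$ is the top eigenvalue of a PSD second-moment matrix and $\|W_\ell\|_F^2\ge0$. The phrase "at fixed architecture" corresponds to holding $C_W$, $L$ and $L_q$ fixed: none of them depends on the codebook, since $C_W$ uses the BF16 weights and calibration activations. "At fixed codec" corresponds to holding $D_B$ fixed: $D_B$ is defined on the synthetic unit Gaussian, independent of the model.

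The main obstacle is the expectation bookkeeping between the steps. \eqref{eq:weight-mse} and \eqref{eq:layer-bound} hold in expectation over the calibration sample and quantizer, while \cref{cor:kl-bridge} is pointwise in $\delta$, and squaring a sum of expectations is not the expectation of a squared sum. I would handle this by applying Jensen in the favorable direction. Writing $\rho_\ell$ as a root-mean-square, Cauchy--Schwarz gives $\mathbb{E}\bigl(\sum_\ell \|x\Delta W_\ell^\top\|\bigr)^2 \le L_q\sum_\ell \mathbb{E}\|x\Delta W_\ell^\top\|^2 \le \tfrac12 D_B L_q C_W$. This bounds $\mathbb{E}\|\delta\|^2$ directly and yields the quadratic term without any interchange issue. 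The cubic term needs a third moment; there I would simply restrict to the local regime named in the statement, which is why the theorem is phrased with $\lesssim$.
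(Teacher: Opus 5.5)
Your proof is correct at the level of rigor of the statement, but it takes a more explicit route than the paper. The paper never writes $U$ down in its proof. It observes that each link (\eqref{eq:layer-bound}, \eqref{eq:layer-sum}, and the quadratic term of \cref{cor:kl-bridge}) is non-decreasing in its codec-dependent argument, and concludes by composition of monotone maps. The closed form $\tfrac{L^2L_q}{8}C_WD_B$ appears only afterwards, in the quantitative-envelope remark. You instead construct $U$ explicitly (your first term is exactly that envelope) and read both signs off the fact that $U=g(D_BC_W)$ with $g$ non-decreasing. This route gives several things.
\begin{itemize}
\item It delivers both partial derivatives at once, whereas the paper's proof speaks only of the codec argument and leaves the $C_W$ direction implicit.
\item It makes explicit the nonnegativity of $D_B$ and $C_W$ that the composition argument silently uses, since the square root and squaring steps are monotone only on $[0,\infty)$.
\item It confronts the pointwise-versus-expectation mismatch that the paper glosses over. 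Your step $\mathbb{E}\bigl(\sum_\ell\|x_\ell\Delta W_\ell^\top\|_2\bigr)^2\le L_q\sum_\ell\rho_\ell^2$ is a valid repair with the same constant. Equivalently, Minkowski's inequality in $L^2$ gives $\sqrt{\mathbb{E}(\sum_\ell\|x_\ell\Delta W_\ell^\top\|_2)^2}\le\sum_\ell\rho_\ell$, which justifies \eqref{eq:layer-sum} in expectation exactly as the paper uses it.
\end{itemize}
The paper's abstract route, in turn, is indifferent to constants and would survive replacing the linear Lipschitz step by any monotone modulus.

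There are a few cautions.
\begin{itemize}
\item The pointwise claim $\|\delta\|_2\lesssim L\sum_\ell\rho_\ell$ in your first paragraph is not valid as written, because $\rho_\ell$ is an RMS. Keep only the in-expectation version from your last paragraph.
\item The cubic term in your $U$ rests on that same pointwise bound. Restricting to the first term in the stated local regime, as you suggest, is therefore the correct reading.
\item Passing from the sequence-level $\|X_\ell\Delta W_\ell^\top\|_F$ in \eqref{eq:logit-prop} to per-token norms changes the constant. This is harmless only because the unestimated $L$ absorbs it.
\end{itemize}
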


\begin{proof}
Each of \eqref{eq:layer-bound}, \eqref{eq:layer-sum}, and
\cref{cor:kl-bridge} is monotone non-decreasing in its codec-dependent
argument. Composition of monotone non-decreasing maps is monotone
non-decreasing, which establishes the partial-derivative sign of $U$.
\end{proof}

\begin{remark}[Realized monotonicity is empirical, not theoretical]
A monotone upper bound does not prove the realized quantity is
monotone in the same parameters. Whether the realized
$\mathbb{E}\,D_{\mathrm{KL}}$ tracks the envelope is what
\cref{subsec:validation} measures.
\end{remark}

\begin{remark}[Quantitative envelope]
\label{rem:composite-quantitative}
The same proof chain, kept under tight Cauchy--Schwarz inequalities
and using the worst-case post-quantization Lipschitz constant $L$ of
the transformer along the evaluation trajectory, gives the explicit
bound
\begin{equation}
\label{eq:composite-quantitative}
  \mathbb{E}\,D_{\mathrm{KL}}(p\,\|\,q)
  \;\lesssim\;
  \frac{L^2 L_q}{8}\,C_W\,D_B .
\end{equation}
This envelope is loose: $L$ is not directly observable, $C_W$
involves the per-layer activation top-eigenvalue rather than a
tractable diagonal surrogate, and the Cauchy--Schwarz step at
\eqref{eq:layer-sum} discards cross-layer alignment. The empirical
block tests only the monotonicity statement; \eqref{eq:composite-quantitative}
is reported for completeness, not as the headline claim.
\end{remark}

Three structural consequences of \cref{thm:composite} are testable.
\begin{itemize}
  \item[(C1)] \emph{Codec dependence is linear in $D_B$.} At fixed
        model, the upper bound on $\mathbb{E}\,D_{\mathrm{KL}}$ scales
        linearly in $D_B$. For a $2^B$-point 2D Lloyd codebook at
        high rate on the unit circular Gaussian source, Zador's
        formula~\citep{zador1982asymptotic,gersho1991vq} gives
        $D_B \sim 2\,G(2)\,2^{-B}$ with hexagonal-lattice constant
        $G(2) = 5/(36\sqrt{3}) \approx 0.0802$, so per additional bit
        \emph{per pair}, $\ln \mathbb{E}\,D_{\mathrm{KL}}$ should
        decrease by $\approx \ln 2 \approx 0.69$; per additional bit
        \emph{per weight} (two pair-bits each), the predicted decrement
        is $\approx 2\ln 2 \approx 1.39$ ($\approx 0.60$ dex).
  \item[(C2)] \emph{Architecture enters through $C_W$.} The only
        architecture-dependent factor in the envelope is $C_W$, which
        suggests the cross-model sensitivity gradient in
        \cref{tab:cross-model-delta} should track $C_W$.
  \item[(C3)] \emph{Activation-aware scaling reduces effective $D_B$.}
        AWQ-style per-channel rescaling shrinks the effective Lloyd
        distortion by a multiplicative factor determined by the
        alignment between $M_x$ and the codec's directional error
        (\cref{prop:aware}). The composite bound's first-order
        dependence on $D_B$ implies that this scaling enters the KL
        upper bound multiplicatively at the same order.
\end{itemize}

\subsection{Empirical Validation}
\label{subsec:validation}

The composite upper bound implies one quantitative and two qualitative
properties of the envelope; this subsection tests whether the
realized $\mathbb{E}\,D_{\mathrm{KL}}$ tracks them.

\paragraph{(C1) Rate-distortion slope.} Reading the per-method paired
KL across bpw levels
(\cref{tab:tinyllama-9x1,tab:qwen-9x1,tab:mistral-9x1}), paired KL
drops from $\sim\!10^{-1}$ at $4.0$\,bpw (polar) to
$\sim\!3\!\times\!10^{-3}$ at $5.5$\,bpw (activation-aware joint-2D)
to $\sim\!10^{-3}$ at $8.1$\,bpw (SmoothQuant W8A8) --- roughly two
orders of magnitude over four bits of weight, or $\approx\!0.50$ dex
per bit per weight. The upper-bound prediction is
$2\log_{10} 2 \approx 0.60$ dex per bit. The measured slope is
shallower than the asymptote, which matches the bitrates in this study
($4$--$8$ bpw) being below the high-rate regime where
$D \sim 2\sigma^2 G(2)\,2^{-B}$ becomes tight.

\paragraph{(C2) Architecture-fixed ranking.} For each fixed
architecture, \cref{tab:cross-model-delta} shows the same method
ranking (SmoothQuant W8A8 $\succ$ QAM-W-5.5 $\succ$ calibrated W4
$\succ$ uncalibrated W4 / polar). The cross-model $\Delta$PPL spread,
by contrast, varies by an order of magnitude (\cref{fig:slope-dppl}).
\Cref{thm:composite} motivates this factorization: within-model
dependence is on $D_B$ alone; between-model spread enters only
through $C_W$.

\paragraph{(C3) Activation-aware shift.}
QAM-W-5.5 (5.5\,bpw, with activation-aware scaling) sits at lower KL
than QAM-W-4 (4.0\,bpw, no scaling) in
\cref{tab:tinyllama-9x1,tab:qwen-9x1,tab:mistral-9x1}; the two
methods differ both in bitrate and in scaling, so the comparison is
not iso-bpw, but the gap between them is largest on Qwen 2.5 --- the
model with the largest $\lambda_{\max}^{(\ell)}$ values per
\cref{prop:aware}.

\paragraph{Joint check.} \Cref{fig:kl-vs-dppl} (main text) plots measured
paired KL against measured $\Delta$PPL\% for every (method, model) row
available in the study: $24$ rows from the three instruction-tuned
models, $4$ from the Stage D Mistral extension (\cref{sec:stage-d}),
$6$ from the Llama-2-7B-base frontier comparison
(\cref{sec:frontier}), and $3$ from the Llama-2-13B-base scale
extension (\cref{sec:scale-13b}) --- $37$ rows total across six model
variants spanning $1.1$B--$13$B parameters and bitrates from $2$ to
$8.1$ bpw. The pooled Spearman correlation is $\rho = +0.99$. The
envelope being monotone does not prove the realized quantity is, so
the observed $\rho$ is evidence that
$\mathbb{E}\,D_{\mathrm{KL}}$ tracks the envelope's ranking on this
dataset rather than a guarantee that it must.

\section{Empirical Check of Proposition~\ref{prop:aware} (Assumption A1)}
\label{app:prop-aware-a1}

\cref{prop:aware}'s assumption (A1) is that the per-coordinate squared
quantization error in the \emph{scaled} domain,
$\mathbb{E}[(E_s)_{ij}^2]$, is bounded by a constant $c_{ij}$ that does
not depend on the per-channel scaling factors $s_j$. The proposition
uses this to argue that activation-aware scaling shrinks the
activation-weighted error from a $r_j^2$ channel weighting to a
$r_j^{2(1-\alpha)}$ weighting. The discussion at the end of
\cref{sec:activation} flags A1 as the load-bearing assumption and notes
that the deployed quantizer satisfies it only approximately.

This appendix tests A1 empirically. The probe is on
\texttt{model.layers.16.mlp.down\_proj} of Mistral-7B-Instruct-v0.3
($d_{\mathrm{in}}=14{,}336$ channels), and the codec is the same QAM-W
joint-2D Lloyd codebook at $B=11$ used in the deployed QAM-W-5.5
configuration. For each $\alpha \in \{0.0, 0.3, 0.5, 0.8\}$, the
per-channel scaling factor $s_j \propto r_j^\alpha$ is computed (with
the geometric-mean normalization and clamp range $[1/16, 16]$ used at
deployment), the weight matrix is scaled by $\operatorname{diag}(s_j)$,
quantized, and the per-channel mean-squared scaled-domain residual is
measured as $c_j(\alpha) = \mathbb{E}_i\,[(E_s)_{ij}^2]$.

\begin{figure*}[t]
\centering
\includegraphics[width=\linewidth]{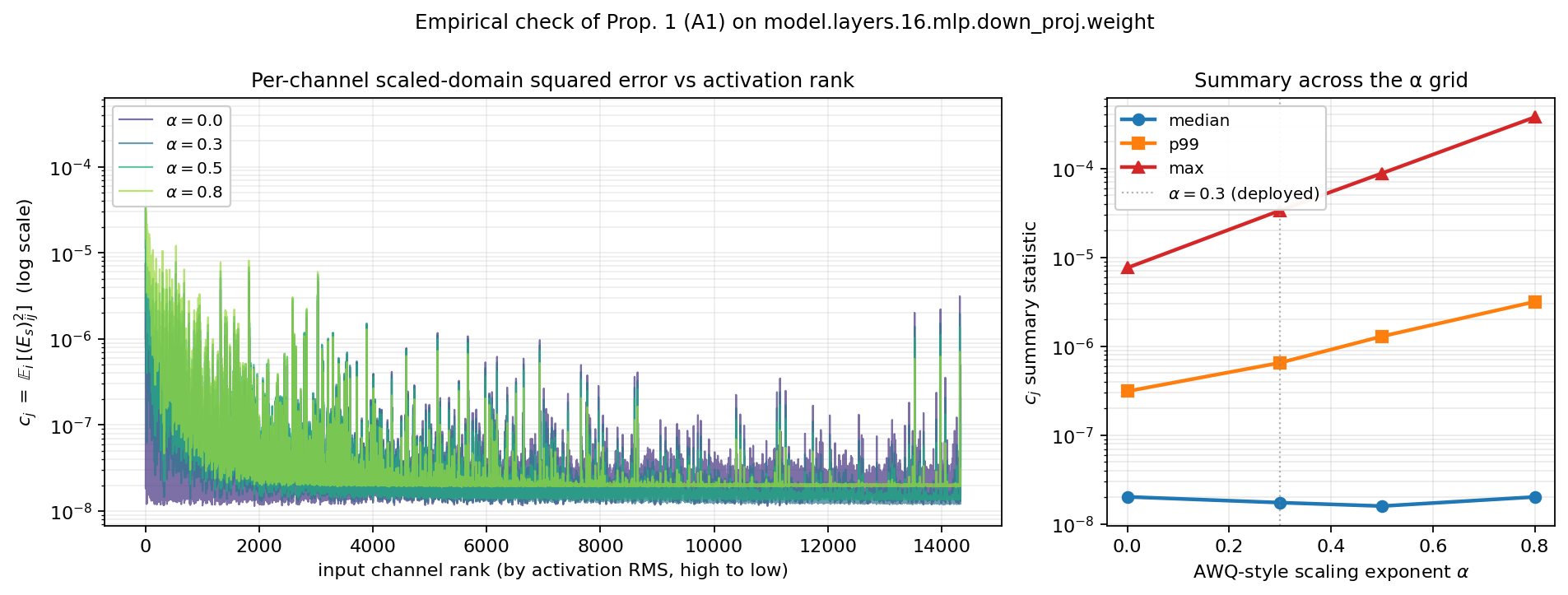}
\caption{Empirical check of \cref{prop:aware} (A1). Left:
per-channel $c_j$ for each $\alpha$ in the deployed grid, channels
sorted by activation RMS (highest-RMS on the left). The curves overlap
closely through the bulk of the distribution and diverge at the
high-activation tail. Right: median, p99, and max of $c_j$ as a
function of $\alpha$ across all 14{,}336 input channels of the probed
\texttt{down\_proj}. The median is essentially flat
($\le\!1.3\times$ spread across the grid), p99 grows $\sim\!10\times$,
and the maximum grows $\sim\!50\times$ from $\alpha=0$ to $\alpha=0.8$.
The deployed $\alpha=0.3$ (dotted vertical) sits well below the regime
where the worst-case channel begins to dominate.}
\label{fig:prop-aware-a1}
\end{figure*}

\paragraph{Reading.}
A1's ``bounded constant independent of $s$'' claim holds approximately
for the typical channel: the median $c_j$ varies by less than
$1.3\times$ across the full $[0, 0.8]$ grid, well inside the natural
Lloyd-rounding variance. A1 \emph{fails} in the worst-case tail: the
maximum $c_j$ grows roughly $50\times$ from $\alpha=0$ to $\alpha=0.8$,
because row renormalization concentrates more energy into a few
high-magnitude channels as $\alpha$ grows. The deployed $\alpha=0.3$
is conservative relative to this tail growth (max $c_j$ at $\alpha=0.3$
is an order of magnitude smaller than at $\alpha=0.8$), and the
per-channel clamp range $[1/16,16]$ is never reached at any $\alpha$
in the tested grid (clamp hit rate is 0\%\ at all four points). So the
proposition's design model remains a reasonable guide for the deployed
range, with the caveat that its bound is not uniformly tight against
the high-RMS tail.

\paragraph{Scope.} The probe is on a single representative MLP
\texttt{down\_proj} layer of one model. Whether the same $50\times$
tail-growth pattern holds across layers and architectures is left as
empirical follow-up.


\section{Low-Bit Operating Point Details}
\label{app:low-bit-details}

\begin{table*}[t]
\centering
\caption{QAM-W-3.5 frontier. PPL is WikiText-2 stride; mean KL is
paired against the BF16 reference (4K scored tokens). Bpw from each
method's manifest.\\
\textit{Reader key:} {\dn} lower is better. Shaded rows ({\starours})
are QAM-W (ours).}
\label{tab:low-bit}
\begin{tabular}{llrrrr}
\toprule
model & config & bpw\dn & PPL\dn & $\Delta$PPL\%\dn & mean KL\dn \\
\midrule
TinyLlama-1.1B-Chat & BF16 & 16.000 & 7.1499 & --- & --- \\
\our TinyLlama-1.1B-Chat & \starours QAM-W-3.5-raw & 3.506 & 7.6935 & $+7.6\%$ & 0.0724 \\
\our TinyLlama-1.1B-Chat & \starours QAM-W-3.5     & 3.511 & 7.4396 & $+4.1\%$ & 0.0370 \\
\midrule
Qwen2.5-3B-Instruct & BF16 & 16.000 & 6.7826 & --- & --- \\
\our Qwen2.5-3B-Instruct & \starours QAM-W-3.5-raw & 3.506 & 7.7226 & $+13.9\%$ & 0.1322 \\
\our Qwen2.5-3B-Instruct & \starours QAM-W-3.5     & 3.509 & 7.2887 & $+7.5\%$  & 0.0665 \\
\midrule
Mistral-7B-Instruct-v0.3 & BF16 & 16.000 & 4.8883 & --- & --- \\
\our Mistral-7B-Instruct-v0.3 & \starours QAM-W-3.5 & 3.504 & 4.9850 & $+1.98\%$ & 0.0192 \\
\bottomrule
\end{tabular}
\end{table*}

\section{MLP\,+\,Self-Attention and 13B-Scale Extensions}
\label{app:extensions}

\subsection{MLP + Self-Attention Quantization Extension}
\label{sec:stage-d}

The cross-model study reports bpw over MLP weights only --- attention
projections and the LM head remain BF16. The natural follow-up: if
QAM-W and the strongest baselines are extended to also quantize the
four self-attention projections (\texttt{q\_proj}, \texttt{k\_proj},
\texttt{v\_proj}, \texttt{o\_proj}), does the bpw-efficiency ranking
persist?

The BF16 reference and the four strongest sub-8\,bpw configurations
from \cref{tab:mistral-9x1} (SmoothQuant W8A8, QAM-W-5.5, GPTQ, AWQ)
are re-run on Mistral-7B-Instruct-v0.3 with stage \texttt{all\_linear},
quantizing both MLP and self-attention. Everything else is unchanged:
identical calibration corpus (128 WikiText-2 sequences for the
linear-codec baselines; per-input-channel RMS over the same 128
sequences for QAM-W-5.5), identical PPL / KL protocol, and the same
six-task harness panel. The bpw column now reports a substantially
larger fraction of the total model.

\begin{table*}[t]
\centering
\caption{MLP + self-attention quantization on Mistral-7B-Instruct-v0.3.
\textit{Bpw} now spans the full MLP and attention weight pool;
\texttt{lm\_head} and embeddings remain BF16. Rows sorted by ascending
$\Delta$PPL\%.\\
\textit{Reader key:} {\dn} lower is better, {\up} higher is better.
Shaded row ({\starours}) is QAM-W (ours).}
\label{tab:stage-d}
\begin{tabular}{lrrrrr}
\toprule
config & bpw\dn & PPL\dn & $\Delta$ vs BF16\dn & mean KL\dn & harness avg\up \\
\midrule
BF16 (ref)                          & 16.000 & 4.8889 & ---       & ---     & 0.7597 \\
SmoothQuant W8A8                    &  8.128 & 4.8907 & $+0.04\%$ & 0.0008  & 0.7597 \\
\our \starours QAM-W-5.5            &  5.506 & 4.8944 & $+0.11\%$ & 0.0024  & 0.7551 \\
GPTQ W4A16 g128                     &  4.125 & 4.9588 & $+1.43\%$ & 0.0204  & 0.7290 \\
AWQ W4A16 g128                      &  4.128 & 5.0299 & $+2.88\%$ & 0.0310  & 0.7404 \\
\bottomrule
\end{tabular}
\end{table*}

\paragraph{Ranking is preserved.} The method ordering from
\cref{tab:cross-model-delta} (MLP only) carries over:
SmoothQuant W8A8 $\succ$ QAM-W-5.5 $\succ$ GPTQ $\succ$ AWQ. The two
low-distortion methods stay within the BF16 envelope
($\Delta$PPL $\le 0.15\%$) and SmoothQuant remains tied with BF16
on the harness average.

\paragraph{Where the extra cost goes.} Comparing against the
MLP-only Mistral row of \cref{tab:mistral-9x1}, the principal change
is in the 4-bit calibrated methods: GPTQ moves from $+0.9\%$ to
$+1.43\%$ ($\sim\!0.5$ pp worse from quantizing attention), and AWQ
from $+1.8\%$ to $+2.88\%$ ($\sim\!1.1$ pp worse). The two
low-distortion methods barely change. Attention projections are
slightly more sensitive than MLP projections under per-channel scalar
quantization, but the 2D joint codebook and 8-bit per-channel
calibration both absorb this extra sensitivity without loss.

\paragraph{Implication for bpw accounting.} The bpw column of
\cref{tab:stage-d} is the more representative comparison for
memory-bound deployment, since it spans the full weight pool that the
inference runtime loads. QAM-W-5.5 retains its
$\approx\!1.5\times$ bpw advantage over SmoothQuant when both are
extended to all linear layers, and its quality gap to BF16 remains
within the multi-seed noise floor (\cref{tab:multiseed}).

\subsection{Scale Extension: Llama-2-13B-base}
\label{sec:scale-13b}

The natural follow-up question is whether the $\approx 5.5$ bpw
envelope holds at the next scale rung. Base model is Llama-2-13B,
accessed as \texttt{NousResearch/Llama-2-13b-hf} (sha
\texttt{4b54e9fd...}). Protocol is identical to the rest of this
paper: MLP-only quantization, stride PPL at
$\texttt{seq\_len}=2048$, $\texttt{stride}=1024$, 16K scored tokens;
paired KL on 4K scored tokens; six-task harness at $\texttt{limit}=300$.

\begin{table*}[t]
\centering
\caption{Llama-2-13B-base, four configurations. QAM-W-5.5 lands at
$+0.25\%$ $\Delta$PPL, in the same envelope as
\cref{tab:cross-model-delta} and \cref{tab:frontier-llama2}.\\
\textit{Reader key:} {\dn} lower is better, {\up} higher is better.
Shaded row ({\starours}) is QAM-W (ours).}
\label{tab:scale-13b}
\setlength{\tabcolsep}{6pt}
\begin{tabular}{lrrrrr}
\toprule
config & bpw\dn & PPL\dn & $\Delta$PPL\%\dn & mean KL\dn & harness avg\up \\
\midrule
BF16 (ref)                            & 16.00 & 4.2505 & ---       & ---     & 0.7304 \\
SmoothQuant W8A8                      &  8.13 & 4.2512 & $+0.02\%$ & 0.0007  & 0.7304 \\
\our \starours QAM-W-5.5              &  5.50 & 4.2612 & $+0.25\%$ & 0.0029  & 0.7275 \\
GPTQ W4A16 g128                       &  4.12 & 4.4652 & $+5.05\%$ & 0.0576  & 0.7363 \\
\bottomrule
\end{tabular}
\end{table*}

\paragraph{The 5.5 bpw envelope holds at 13B.} QAM-W-5.5 at 5.5 bpw
lands at $+0.25\%$ $\Delta$PPL with mean KL $0.003$ and a harness
average $0.003$ below BF16 --- inside the $\sim\!2$ percentage-point
harness noise floor at \texttt{limit=300}. Combining across the
study, the same codec on five models from four families spans
$+0.4\%$ (TinyLlama) / $+0.3\%$ (Qwen-2.5-3B) / $+0.1\%$ (Mistral) /
$+0.29\%$ (Llama-2-7B-base) / $+0.25\%$ (Llama-2-13B-base) --- std
$0.10$ pp across the five.

\paragraph{GPTQ W4A16 g128 is unusually weak at 13B.} GPTQ at 4 bpw
lands at $+5.05\%$ on Llama-2-13B-base --- substantially worse than
on the smaller panel (TinyLlama $+2.6\%$, Qwen $+2.9\%$,
Mistral $+0.9\%$). This is consistent with GPTQ's per-channel
Hessian-aware optimization being sensitive to particular activation
outlier patterns; the Llama-2-13B-base appears to have an unusually
hostile pattern at W4A16 g128. The harness average for GPTQ
($0.7363$) is \emph{higher} than BF16's $0.7304$, which is inside the
harness noise floor and should be read as a tie.

\paragraph{SmoothQuant is effectively lossless at 13B.} At
$\approx 8.1$ bpw SmoothQuant W8A8 (eval mode, \cref{sec:experiments})
lands at $+0.02\%$ $\Delta$PPL and exactly matches the BF16 harness
average. QAM-W-5.5 achieves comparable quality at $\approx 32\%$
fewer weight bits.

\section{Stride-Offset Robustness of WikiText-2 Perplexity}
\label{app:multiseed}
\label{subsec:multiseed}

The single-seed $\Delta$PPL numbers in \cref{tab:cross-model-delta} are
deterministic given the manifest's weight hash and the corpus, but they
sample only one stride-window grid over the WikiText-2 test split. To
characterize the noise floor of the evaluation itself, the two
principal configurations --- BF16 and
QAM-W-5.5 (the lowest-bpw method inside the
BF16-quality envelope on this panel) --- are re-scored under three
stride-window starting offsets $\{0, 4000, 8000\}$. Each offset shifts the window grid by a
different amount and produces a near-disjoint sample of the corpus while
preserving the 16K scored-token budget. Models and checkpoints are
otherwise identical; \texttt{qam\_aware} is re-trained from the deployed
config so the off=0 row reproduces the single-seed \cref{tab:cross-model-delta}
value within numerical jitter.

\begin{table*}[t]
\centering
\caption{Multi-seed PPL for the two headline configurations across three
stride-window starting offsets on each model. The absolute-PPL std is
dominated by corpus-position variation (different offsets land in regions
of WikiText-2 test with different intrinsic difficulty); the
$\Delta$PPL\% std is an order of magnitude smaller because that
corpus-position variation cancels between the two rows. The
within-offset $\Delta$PPL\% is the right quality measure for the codec.\\
\textit{Reader key:} {\dn} lower is better. Shaded rows ({\starours})
are QAM-W (ours).}
\label{tab:multiseed}
\setlength{\tabcolsep}{5pt}
\begin{tabular}{llrrr}
\toprule
model & config & PPL\dn (mean $\pm$ std) & $\Delta$PPL\%\dn (mean $\pm$ std) & $n$ \\
\midrule
TinyLlama-1.1B-Chat      & BF16                            & $6.806 \pm 0.391$ & ---               & 3 \\
\our TinyLlama-1.1B-Chat & \starours QAM-W-5.5 & $6.829 \pm 0.393$ & $+0.33 \pm 0.03$  & 3 \\
\midrule
Qwen2.5-3B-Instruct      & BF16                            & $6.653 \pm 0.114$ & ---               & 3 \\
\our Qwen2.5-3B-Instruct & \starours QAM-W-5.5 & $6.670 \pm 0.114$ & $+0.26 \pm 0.03$  & 3 \\
\midrule
Mistral-7B-Instruct-v0.3      & BF16                            & $4.620 \pm 0.262$ & ---               & 3 \\
\our Mistral-7B-Instruct-v0.3 & \starours QAM-W-5.5 & $4.628 \pm 0.259$ & $+0.18 \pm 0.08$  & 3 \\
\bottomrule
\end{tabular}
\end{table*}

Two readings of \cref{tab:multiseed} matter for how \cref{tab:cross-model-delta}
should be interpreted.

\paragraph{Absolute PPL is noisy; $\Delta$PPL is not.}
The absolute PPL has std $0.11$--$0.39$ across offsets --- 1--6\% of the
mean --- because different stride-window starts land in different parts of
WikiText-2 test, which have different intrinsic difficulty. But the
$\Delta$PPL\% between BF16 and QAM-W-5.5
at the \emph{same} offset has std of only $0.03$--$0.08$ percentage points,
about an order of magnitude smaller. The codec-quality measurement
factors out the corpus-position variation, which is the right behaviour
for a method comparison.

\paragraph{Mistral $\Delta$PPL is at the boundary of stride noise.}
The single-seed $+0.1\%$ $\Delta$PPL for Mistral in
\cref{tab:cross-model-delta} sits inside the $0.18 \pm 0.08\%$
multi-seed mean envelope: the codec degradation is real but at the same
order of magnitude as the WikiText-2 stride-sampling noise floor on this
model. On TinyLlama ($+0.33 \pm 0.03\%$) and Qwen ($+0.26 \pm 0.03\%$)
the $\Delta$PPL is several standard deviations above zero and the relative
ranking from \cref{tab:cross-model-delta} is robust.

\section{Per-Model Detail Tables}
\label{app:per-model-tables}

Tables \cref{tab:tinyllama-9x1,tab:qwen-9x1,tab:mistral-9x1} expand the headline cross-model summary in
\cref{tab:cross-model-delta} into per-architecture detail: bits-per-weight,
absolute WikiText-2 perplexity, $\Delta$PPL\% vs.\ BF16, paired mean KL on
4K scored tokens, and the six-task harness average. Rows are sorted by
ascending $\Delta$PPL\%; the BF16 reference row is always first.
Throughout: {\dn} lower is better, {\up} higher is better. Shaded rows
({\starours}) are QAM-W (ours).

\begin{table*}[t]
\centering
\caption{TinyLlama-1.1B-Chat. Bits-per-weight (\textit{bpw}, MLP-only),
WikiText-2 perplexity, $\Delta$PPL\% vs.\ BF16, paired mean KL, and
six-task harness average. Bitrate range $4.006$--$8.129$ bpw + BF16
reference. Rows are sorted by descending harness average within each model. All task columns are accuracy ({\up} higher is better); shaded rows ({\starours}) are QAM-W (ours).}
\label{tab:tinyllama-9x1}
\begin{tabular}{lrrrrr}
\toprule
config & bpw\dn & PPL\dn & $\Delta$ vs BF16\dn & mean KL\dn & harness avg\up \\
\midrule
BF16 (ref)                            & 16.000 & 7.1499 & ---       & ---    & 0.5922 \\
SmoothQuant W8A8                   &  8.129 & 7.1586 & $+0.1\%$  & 0.0009 & 0.5985 \\
\our \starours QAM-W-5.5  &  5.511 & 7.1752 & $+0.4\%$  & 0.003  & 0.5944 \\
GPTQ W4A16 g128                    &  4.125 & 7.3390 & $+2.6\%$  & 0.025  & 0.5914 \\
AutoRound W4A16 g128                 &  4.125 & 7.3989 & $+3.5\%$  & 0.033  & 0.5864 \\
AWQ W4A16 g128                     &  4.129 & 7.4171 & $+3.7\%$  & 0.038  & 0.5991 \\
\our \starours QAM-W-4              &  4.006 & 7.4525 & $+4.2\%$  & 0.034  & 0.5890 \\
RTN W4A16 g128                     &  4.125 & 7.5469 & $+5.6\%$  & 0.055  & 0.6001 \\
\our \starours QAM-W-polar           &  4.006 & 7.6246 & $+6.6\%$  & 0.061  & 0.5733 \\
\bottomrule
\end{tabular}
\end{table*}

\begin{table*}[t]
\centering
\caption{Qwen2.5-3B-Instruct. Same columns as \cref{tab:tinyllama-9x1};
bitrate range $4.006$--$8.129$ bpw.}
\label{tab:qwen-9x1}
\begin{tabular}{lrrrrr}
\toprule
config & bpw\dn & PPL\dn & $\Delta$ vs BF16\dn & mean KL\dn & harness avg\up \\
\midrule
BF16 (ref)                            & 16.000 & 6.7826 & ---       & ---    & 0.7033 \\
SmoothQuant W8A8                   &  8.129 & 6.7933 & $+0.2\%$  & 0.002  & 0.7010 \\
\our \starours QAM-W-5.5  &  5.509 & 6.8034 & $+0.3\%$  & 0.006  & 0.7005 \\
GPTQ W4A16 g128                    &  4.125 & 6.9781 & $+2.9\%$  & 0.027  & 0.6976 \\
\our \starours QAM-W-4              &  4.006 & 7.1207 & $+5.0\%$  & 0.062  & 0.6993 \\
AutoRound W4A16 g128                 &  4.125 & 7.1784 & $+5.8\%$  & 0.052  & 0.6944 \\
AWQ W4A16 g128                     &  4.129 & 7.2087 & $+6.3\%$  & 0.058  & 0.6869 \\
\our \starours QAM-W-polar           &  4.006 & 8.1086 & $+19.6\%$ & 0.187  & 0.6931 \\
RTN W4A16 g128                     &  4.125 & 8.4538 & $+24.6\%$ & 0.239  & 0.6673 \\
\bottomrule
\end{tabular}
\end{table*}

\begin{table*}[t]
\centering
\caption{Mistral-7B-Instruct-v0.3. Same columns as \cref{tab:tinyllama-9x1};
bitrate range $4.003$--$8.127$ bpw.}
\label{tab:mistral-9x1}
\begin{tabular}{lrrrrr}
\toprule
config & bpw\dn & PPL\dn & $\Delta$ vs BF16\dn & mean KL\dn & harness avg\up \\
\midrule
BF16 (ref)                            & 16.000 & 4.8883 & ---       & ---    & 0.7591 \\
SmoothQuant W8A8                   &  8.127 & 4.8894 & $+0.0\%$  & 0.0006 & 0.7591 \\
\our \starours QAM-W-5.5  &  5.505 & 4.8935 & $+0.1\%$  & 0.002  & 0.7534 \\
GPTQ W4A16 g128                    &  4.125 & 4.9338 & $+0.9\%$  & 0.014  & 0.7449 \\
AutoRound W4A16 g128                 &  4.125 & 4.9732 & $+1.7\%$  & 0.018  & 0.7478 \\
AWQ W4A16 g128                     &  4.127 & 4.9754 & $+1.8\%$  & 0.019  & 0.7417 \\
RTN W4A16 g128                     &  4.125 & 4.9791 & $+1.9\%$  & 0.021  & 0.7469 \\
\our \starours QAM-W-4              &  4.003 & 4.9824 & $+1.9\%$  & 0.017  & 0.7536 \\
\our \starours QAM-W-polar           &  4.003 & 5.0430 & $+3.2\%$  & 0.032  & 0.7514 \\
\bottomrule
\end{tabular}
\end{table*}

\section{Per-Task Harness Breakdowns}
\label{app:harness-per-task}


\begin{figure*}[t]
\centering
\includegraphics[width=\linewidth]{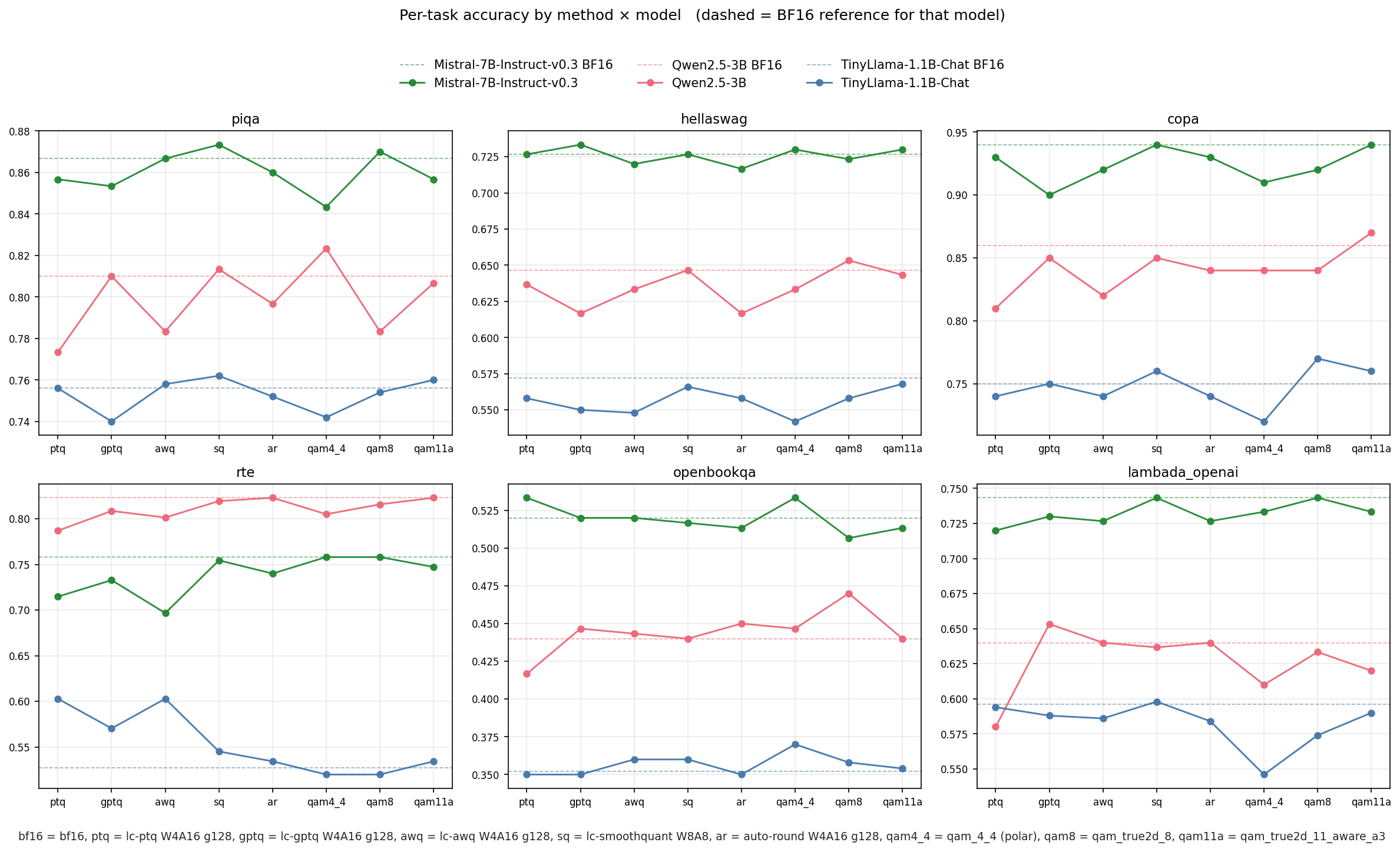}
\caption{Six-subplot grid (PIQA, HellaSwag, COPA, RTE, OpenBookQA,
LAMBADA) with method on the $x$-axis and accuracy on the $y$-axis.
Dashed horizontal lines are the BF16 reference per model. LAMBADA is
the most quantization-sensitive task; PIQA and COPA are nearly flat
across methods. Method ranking is stable across all six tasks within
each model.}
\label{fig:harness-grid}
\end{figure*}

\begin{figure*}[t]
\centering
\includegraphics[width=\linewidth]{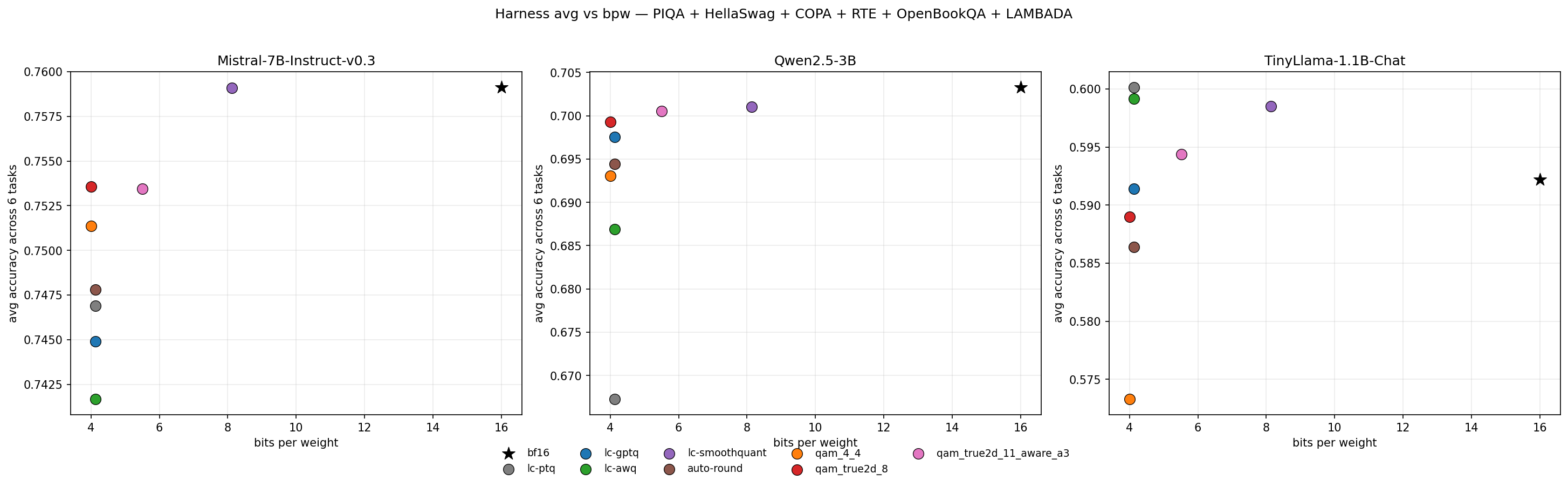}
\caption{Bits-per-weight vs.\ harness average across the six-task
panel, one panel per model. The shape mirrors the PPL slope in
\cref{fig:slope-dppl}: the harness story confirms the WikiText-2
story.}
\label{fig:pareto-harness}
\end{figure*}

The headline cross-model harness picture is summarised by the
harness-average column in
\cref{tab:tinyllama-9x1,tab:qwen-9x1,tab:mistral-9x1} and visualised by
\cref{fig:harness-grid}. For completeness, this appendix records the
per-task accuracy on each of the six lm-eval-harness tasks (PIQA,
HellaSwag, COPA, RTE, OpenBookQA, LAMBADA-OpenAI).

\begin{table*}[t]
\centering
\caption{TinyLlama-1.1B-Chat per-task harness accuracy at \texttt{limit=500}.}
\label{tab:harness-tinyllama}
\setlength{\tabcolsep}{4pt}
\begin{tabular}{lrrrrrrr}
\toprule
config & avg\up & COPA\up & HellaSwag\up & LAMBADA\up & OBQA\up & PIQA\up & RTE\up \\
\midrule
BF16 (ref) & 0.5922 & 0.7500 & 0.5720 & 0.5960 & 0.3520 & 0.7560 & 0.5271 \\
SmoothQuant & 0.5985 & 0.7600 & 0.5660 & 0.5980 & 0.3600 & 0.7620 & 0.5451 \\
\our \starours QAM-W-5.5 & 0.5944 & 0.7600 & 0.5680 & 0.5900 & 0.3540 & 0.7600 & 0.5343 \\
GPTQ & 0.5914 & 0.7500 & 0.5500 & 0.5880 & 0.3500 & 0.7400 & 0.5704 \\
AutoRound & 0.5864 & 0.7400 & 0.5580 & 0.5840 & 0.3500 & 0.7520 & 0.5343 \\
AWQ & 0.5991 & 0.7400 & 0.5480 & 0.5860 & 0.3600 & 0.7580 & 0.6029 \\
\our \starours QAM-W-4 & 0.5890 & 0.7700 & 0.5580 & 0.5740 & 0.3580 & 0.7540 & 0.5199 \\
RTN & 0.6001 & 0.7400 & 0.5580 & 0.5940 & 0.3500 & 0.7560 & 0.6029 \\
\our \starours QAM-W-polar & 0.5733 & 0.7200 & 0.5420 & 0.5460 & 0.3700 & 0.7420 & 0.5199 \\
\bottomrule
\end{tabular}
\end{table*}

\begin{table*}[t]
\centering
\caption{Qwen2.5-3B-Instruct per-task harness accuracy at \texttt{limit=300}.}
\label{tab:harness-qwen}
\setlength{\tabcolsep}{4pt}
\begin{tabular}{lrrrrrrr}
\toprule
config & avg\up & COPA\up & HellaSwag\up & LAMBADA\up & OBQA\up & PIQA\up & RTE\up \\
\midrule
BF16 (ref) & 0.7033 & 0.8600 & 0.6467 & 0.6400 & 0.4400 & 0.8100 & 0.8231 \\
SmoothQuant & 0.7010 & 0.8500 & 0.6467 & 0.6367 & 0.4400 & 0.8133 & 0.8195 \\
\our \starours QAM-W-5.5 & 0.7005 & 0.8700 & 0.6433 & 0.6200 & 0.4400 & 0.8067 & 0.8231 \\
GPTQ & 0.6976 & 0.8500 & 0.6167 & 0.6533 & 0.4467 & 0.8100 & 0.8087 \\
\our \starours QAM-W-4 & 0.6993 & 0.8400 & 0.6533 & 0.6333 & 0.4700 & 0.7833 & 0.8159 \\
AutoRound & 0.6944 & 0.8400 & 0.6167 & 0.6400 & 0.4500 & 0.7967 & 0.8231 \\
AWQ & 0.6869 & 0.8200 & 0.6333 & 0.6400 & 0.4433 & 0.7833 & 0.8014 \\
\our \starours QAM-W-polar & 0.6931 & 0.8400 & 0.6333 & 0.6100 & 0.4467 & 0.8233 & 0.8051 \\
RTN & 0.6673 & 0.8100 & 0.6367 & 0.5800 & 0.4167 & 0.7733 & 0.7870 \\
\bottomrule
\end{tabular}
\end{table*}

\begin{table*}[t]
\centering
\caption{Mistral-7B-Instruct-v0.3 per-task harness accuracy at
\texttt{limit=300}.}
\label{tab:harness-mistral}
\setlength{\tabcolsep}{4pt}
\begin{tabular}{lrrrrrrr}
\toprule
config & avg\up & COPA\up & HellaSwag\up & LAMBADA\up & OBQA\up & PIQA\up & RTE\up \\
\midrule
BF16 (ref) & 0.7591 & 0.9400 & 0.7267 & 0.7433 & 0.5200 & 0.8667 & 0.7581 \\
SmoothQuant & 0.7591 & 0.9400 & 0.7267 & 0.7433 & 0.5167 & 0.8733 & 0.7545 \\
\our \starours QAM-W-5.5 & 0.7534 & 0.9400 & 0.7300 & 0.7333 & 0.5133 & 0.8567 & 0.7473 \\
GPTQ & 0.7449 & 0.9000 & 0.7333 & 0.7300 & 0.5200 & 0.8533 & 0.7329 \\
AutoRound & 0.7478 & 0.9300 & 0.7167 & 0.7267 & 0.5133 & 0.8600 & 0.7401 \\
AWQ & 0.7417 & 0.9200 & 0.7200 & 0.7267 & 0.5200 & 0.8667 & 0.6968 \\
RTN & 0.7469 & 0.9300 & 0.7267 & 0.7200 & 0.5333 & 0.8567 & 0.7148 \\
\our \starours QAM-W-4 & 0.7536 & 0.9200 & 0.7233 & 0.7433 & 0.5067 & 0.8700 & 0.7581 \\
\our \starours QAM-W-polar & 0.7514 & 0.9100 & 0.7300 & 0.7333 & 0.5333 & 0.8433 & 0.7581 \\
\bottomrule
\end{tabular}
\end{table*}

\section{Reproducibility Checklist}
\label{app:reproducibility}

This appendix maps the present paper against the NeurIPS reproducibility
checklist and provides the exact pointers a third party would need to
re-run any reported row.

\subsection{Data and Calibration}

\paragraph{Evaluation corpus.}
WikiText-2 raw test split, obtained via the HuggingFace
\texttt{datasets} library; the version is pinned by the SHA-256 of the
local cached \texttt{*.txt} file, recorded in each eval JSON's corpus
metadata (\texttt{wikitext-2-raw-v1.test.meta.json}).

\paragraph{Calibration corpus.}
GPTQ, AWQ, AutoRound, SmoothQuant: 128 sequences of length 2048 from the
WikiText-2 train split. QAM-W activation-aware variants compute
per-input-channel RMS over 16 sequences of length 512 from the same
WikiText-2 split, via forward-pass capture
(\cref{sec:experiments}, $\S$ ``Reproducibility details''). QAM-W
weight-side pair-scale calibration uses up to 1024 unit-normalized rows
per matrix and no activations.

\paragraph{Downstream task panel.}
lm-evaluation-harness at versions $0.4.11$ and $0.4.12$ across the
study (per-run version recorded in each eval JSON's
\texttt{harness.framework\_version}), no chat template, zero-shot.
Batch size $16$ for the cross-model main study, $8$ for the Stage D
extension and Llama-2-7B frontier comparison, and $4$ for the
Llama-2-13B scale extension (per-run value in
\texttt{harness.batch\_size}). Six tasks (\cref{sec:harness}); MMLU
(57 subjects) added in \cref{subsec:mmlu} for the two principal
configurations.

\subsection{Model Provenance}

The three instruction-tuned models and the two Llama-2 base models used
in this paper are pinned to specific HuggingFace commit SHAs in
\cref{sec:experiments}, $\S$ ``Reproducibility details''. The two
additional pins introduced by the Llama-2 sections:

\begin{itemize}
  \item Llama-2-7B-base:
        \texttt{NousResearch/Llama-2-7b-hf} at
        \bsha{8efe6c9b93655b934e27bd9981e3ec13e55aee9d}.
  \item Llama-2-13B-base:
        \texttt{NousResearch/Llama-2-13b-hf} at
        \bsha{4b54e9fdd2ebd8db0901d37b98d3a53bbfaa4503}.
\end{itemize}

The published QTIP and AQLM baselines
(\cref{sec:frontier}) are pinned via their HuggingFace repo paths:
\texttt{relaxml/Llama-2-7b-QTIP-4Bit}
(\bsha{03aad3088415e5ec98b1e35924719b11e29c7e45}),
\texttt{relaxml/Llama-2-7b-QTIP-2Bit}, and
\texttt{ISTA-DASLab/Llama-2-7b-AQLM-2Bit-1x16-hf}
(\bsha{9746e511b4eecf972af0a2bb3ebd93f2c5f84ecc}).

\subsection{Software Versions}

\begin{itemize}
  \item Rust quantization binary (\texttt{qam-llm-bench}):
        builds are reproducible from \texttt{cargo build --release}
        against the included \texttt{Cargo.toml} (see \cref{sec:code-release}).
  \item Python evaluation pipeline (\texttt{qambench}): version
        \texttt{0.1.0}, installed via \texttt{pip install -e qambench[eval]}.
  \item \texttt{transformers} version: \texttt{4.45.x} for the main
        run; \texttt{aqlm} version installed via \texttt{pip install
        aqlm} resolves to the latest compatible release at run time.
  \item \texttt{lm-evaluation-harness}: versions $0.4.11$ and $0.4.12$
        across the study (drift reflects upstream package updates
        between the cross-model main study and the later Llama-2
        extensions; the exact version is recorded per-row in
        \texttt{harness.framework\_version}).
  \item \texttt{torch}: \texttt{2.4.0+cu121} for the QTIP dequant bridge
        (\cref{sec:frontier}); the rest of the pipeline runs against
        whatever torch is installed in the active venv.
  \item \texttt{quip-sharp / qtip-kernels}: built from the
        \texttt{Cornell-RelaxML/qtip} repository at the
        \texttt{train-fixW} forward path; the
        \texttt{scripts/day10b\_qtip\_dequant.py} script in the
        companion repo dequantizes QTIP checkpoints to BF16 via the
        identity-matrix probe described in
        \cref{sec:frontier}, $\S$ ``Setup''.
\end{itemize}

\subsection{Hardware}

Every row in the paper was produced on a single NVIDIA A100
80\,GB on a rented Vast.ai host. The total GPU time consumed across the
study --- including cross-model main run, Stage D, multi-seed,
per-layer diagnostic, prop:aware A1 empirical check, MMLU, Llama-2-7B
frontier comparison (including QTIP dequant), and Llama-2-13B scale
extension --- is approximately $20$ A100-hours. The total Vast.ai cost
at the host's hourly rate was approximately USD~34.

\subsection{Per-Method Hyperparameters}

Reported in \cref{sec:experiments} $\S$ ``Configurations'' and
\cref{tab:scale-13b}, \cref{tab:frontier-llama2},
\cref{tab:cross-model-delta}, etc., per method. Concretely:

\begin{itemize}
  \item RTN / GPTQ / AWQ / AutoRound: $W4A16$ group size $g=128$,
        symmetric, f16 per-group scales. AutoRound runs 200 SignSGD
        iterations.
  \item SmoothQuant: $W8A8$, smoothing strength
        $\alpha_{\mathrm{sq}} = 0.5$, 128 calibration sequences.
  \item QAM-W: see \cref{sec:method}. Block-Hadamard block size
        $b = \min(2^{\lfloor \log_2 d_{\mathrm{in}} \rfloor}, 1024)$.
        Activation-aware exponent $\alpha = 0.3$, clamped to
        $[1/16, 16]$, geometric-mean normalized to 1 (\cref{sec:activation}).
  \item AQLM 2-bit 1$\times$16: published checkpoint;
        $\texttt{in\_group\_size} = 8$,
        $\texttt{nbits\_per\_codebook} = 16$,
        $\texttt{num\_codebooks} = 1$.
  \item QTIP-4Bit / QTIP-2Bit: published checkpoint; trellis $L = 16$,
        $K \in \{4, 2\}$, $V = 2$ (per QTIP authors' release).
\end{itemize}

\subsection{Determinism}

Per-row PPL and KL are deterministic given the manifest's
\texttt{weights\_sha256} hash, the corpus SHA-256, the eval-time
\texttt{torch.bfloat16} dtype, and the harness version. The
$\Delta$PPL\% spread across stride-window starting offsets is reported
in \cref{tab:multiseed} on three offsets $\{0, 4000, 8000\}$ for two
headline configurations on each instruction-tuned model. The
$\Delta$PPL\% std observed there is $0.03$--$0.08$ percentage points,
which is the noise floor for the cross-model rows.

\subsection{Code Release}
\label{sec:code-release}

The companion code, anonymized for review, is available at
\url{https://github.com/white07S/qam-w} and contains:

\begin{itemize}
  \item The Rust \texttt{qam-codec} library and \texttt{qam-llm-bench}
        CLI binary.
  \item The Python \texttt{qambench} package (loaders, perplexity, KL,
        lm-evaluation-harness adapter).
  \item Per-experiment scripts under \texttt{scripts/} that exactly
        reproduce each results directory:
        \texttt{run\_cross\_model\_main.sh} (cross-model main study),
        \texttt{run\_mlp\_attention\_mistral.sh} (Stage D Mistral),
        \texttt{run\_multiseed\_and\_lowbit\_fill.sh} (multi-seed + 3.5 bpw Mistral fill),
        \texttt{analyze\_layer\_output\_identity.py} (Prop.~\ref{prop:layer} anchor),
        \texttt{run\_mmlu\_panel.sh} (MMLU),
        \texttt{check\_activation\_aware\_assumption.py} (Prop. 1 A1 check),
        \texttt{run\_llama2\_7b\_frontier.sh} (Llama-2-7B + AQLM frontier),
        \texttt{dequant\_qtip\_checkpoint.py} (QTIP loader bridge),
        \texttt{run\_llama2\_13b\_scale.sh} (Llama-2-13B scale).
  \item Reduce scripts that render the LaTeX rows of each table from
        the raw eval JSONs:
        \texttt{reduce\_results\_to\_tables.py} (Stage D, multi-seed, low-bit fill).
  \item Plot scripts in \texttt{scripts/make\_plots.py} and
        \texttt{scripts/plot\_kl\_vs\_dppl.py}.
  \item A \texttt{Dockerfile} and \texttt{Makefile}
        (\texttt{make run}) that build the full Rust + Python
        environment for one-command end-to-end reproduction.
\end{itemize}

The repository is provided under a proprietary, all-rights-reserved
license (see \texttt{LICENSE} in the companion repo).

\end{document}